\documentclass[letterpaper]{article} % DO NOT CHANGE THIS
\usepackage{aaai2027}
\usepackage[hyphens]{url}  % DO NOT CHANGE THIS
\usepackage{graphicx} % DO NOT CHANGE THIS
\usepackage{natbib}  % DO NOT CHANGE THIS AND DO NOT ADD ANY OPTIONS TO IT
\usepackage{caption} % DO NOT CHANGE THIS AND DO NOT ADD ANY OPTIONS TO IT
\usepackage{algorithm}
\usepackage{algorithmic}
\usepackage[version=4]{mhchem}
\usepackage{newfloat}
\usepackage{amsfonts}
\usepackage{listings}
\DeclareCaptionStyle{ruled}{labelfont=normalfont,labelsep=colon,strut=off} % DO NOT CHANGE THIS
\floatstyle{ruled}
\newfloat{listing}{tb}{lst}{}
\floatname{listing}{Listing}

\usepackage{booktabs}
\usepackage{multirow}
\usepackage{enumitem}
\usepackage{booktabs, makecell}
\usepackage{amsthm} 
\newtheorem{theorem}{Theorem}
\newtheorem{proposition}[theorem]{Proposition}
\newtheorem{corollary}[theorem]{Corollary}
\usepackage{amsmath}
\usepackage[table]{xcolor} % 必须引入
\usepackage{pifont}
\usepackage{graphicx}
\usepackage{array}
\newcommand{\mstd}[2]{#1\scriptsize{\textcolor{gray}{$\pm$#2}}\normalsize}
\title{ NICE: Scale-Stable Perturbations for Graph Neural Network Explanations via \\ Noise Corruption }

\author{%
  \textbf{Ziluowen Luo$^1$}, \textbf{Jun Yin$^2$}, \textbf{Ruochen Liu$^1$}, \textbf{Ming Cheng$^1$},\\
  \textbf{Shirui Pan$^3$}, \textbf{Chengqi Zhang$^2$}, \textbf{Senzhang Wang$^{1}\thanks{Corresponding author}$}\\
}
\affiliations{
    \textsuperscript{\rm 1}Central South University, \textsuperscript{\rm 2}Hong Kong Polytechnic University, \textsuperscript{\rm 3}Griffith University \\
     \texttt{\{lzlwddl, szwang, ruochen, 244701028\}@csu.edu.cn} , \texttt{Junmay.yin@connect.polyu.hk} \\
     \texttt{chengqi.zhang@polyu.edu.hk} , \texttt{s.pan@griffith.edu.au}
}

\nocopyright
\begin{document}

\maketitle

\begin{abstract}
Post-hoc Graph Neural Network (GNN) explainers commonly follow a \textit{Perturb-Query} paradigm, inferring the importance of graph elements based on queried predictions to perturbed inputs. 
However, such perturbations often introduce substantial distribution shift, undermining the reliability of the queried predictions used to derive explanations. 
While existing efforts mainly improve perturbed graphs or stabilize model predictions on them, we revisit the perturbation mechanism itself. 
We show that the widely used \textbf{Element-wise Masking} (EM) suppresses edge-induced messages toward zero, causing deterministic scale contraction that accumulates across message-passing layers, a phenomenon we term \textit{Scale Drift}. 
Consequently, prediction changes under EM may conflate information corruption with deviations in propagation scale. 
As a scale-stable alternative to EM, we introduce \textbf{Noise Corruption} (NC), which perturbs each message through matched-norm random-direction corruption while preserving the expected squared message norm. 
Building on NC, we propose \textbf{NICE}, a Noise Corruption-based explanation framework, which learns a Stochastic Restoration Boundary (SRB) under NC-induced uncertainty, balancing target-prediction restoration against compactness. 
Furthermore, Boundary-Integrated Gradient (BIG) converts this boundary into edge attributions by accumulating each edge’s contribution to reducing restoration risk along the restoration path. 
Experiments across multiple benchmarks demonstrate stronger explanation performance and model faithfulness while confirming that NC substantially reduces the Scale Drift induced by masking.
\end{abstract}

\section{Introduction}
Graph Neural Networks (GNNs) have achieved remarkable success across a wide range of graph learning tasks \cite{kipf2016semi,shi2022h2}. 
However, the decision logic of GNNs remains difficult to understand due to the highly entangled message-passing process over graph structures. 
Post-hoc GNN explainers therefore aim to identify the graph elements or structures most responsible for a target prediction \cite{gnne, pge, ReFine, subgraphX, wu2021counterfactual}. 
A broad family of post-hoc explainers follow a Perturb-Query paradigm. 
As illustrated by Figure \ref{fig:intro}, an explainer \textit{(i)} generates element-wise scores, \textit{(ii)} uses them to perturb edge-induced messages during propagation and \textit{(iii)} updates them using the queried predictions \cite{cf^2-counter,10.1109/TPAMI.2022.3204236}. 
The reliability of the resulting explanations thus relys on whether the prediction changes faithfully reflect the message  affected by perturbation.

This assumption is fragile because perturbed graphs can introduce substantial distribution shift, moving the queried predictions away from the model’s original operating regime and making the resulting prediction changes ambiguous \cite{robust_f, OAR, co-operative}. 
To alleviate this issue, existing methods mainly focus on improving explanation quality through better generation of explanation scores and explainer optimization. 
For example, they aim to make explanation subgraphs structurally more natural \cite{d4e}, closer to the original graph distribution \cite{proxy}, or equipped with confidence-aware correction mechanisms \cite{Zhang2025IsYE}. 
However, a crucial factor remains underexplored: whether the observed distribution shift is already introduced by the perturbation mechanism itself, before the resulting signals are used to optimize the explainer.

\begin{figure}[tbp]
    \centering
    \includegraphics[width=1\linewidth]{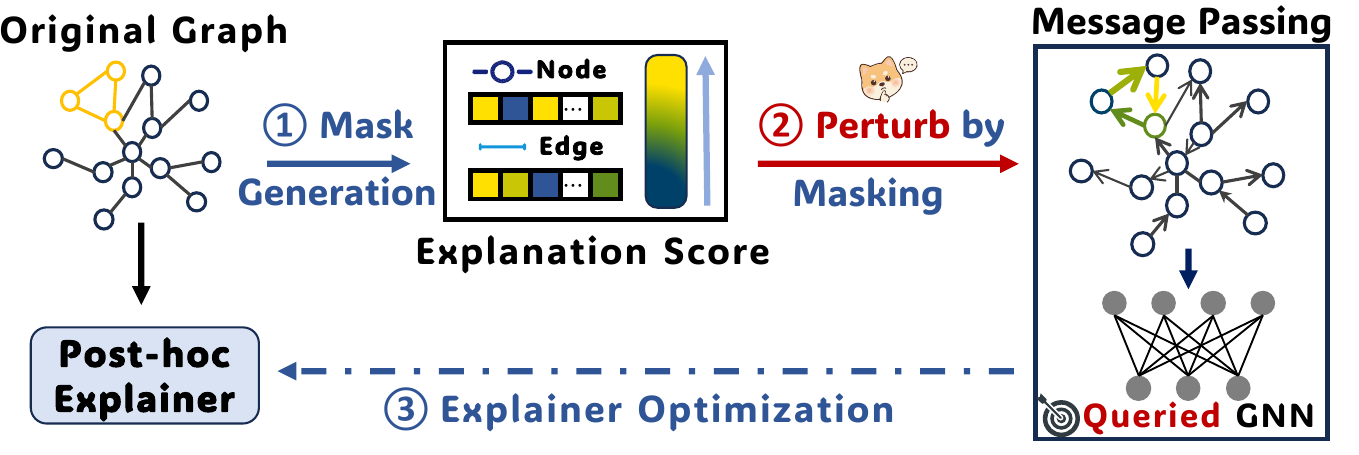}
    \caption{
    Rethinking where the distribution shift enters the \textit{Perturb-Query} pipeline.
Existing efforts focus on \ding{172} or \ding{174} (e.g. ProxyExplainer \cite{proxy}), while this work focuses on \ding{173} the perturbation mechanism itself.}
    \label{fig:intro} 
    \vspace{-0.3cm}
\end{figure}

To isolate whether the perturbation mechanism itself contributes to this shift, we design an experiment using ground-truth explanations as oracle baseline and compare the resulting graph representations with those of the original graph. 
The discrepancy persists under oracle scores, suggesting that score quality alone does not explain the observed shift. 
We identify this effect to Element-wise Masking (EM), the default mechanism used to instantiate explanation scores during message passing. 
As show on the left of Figure \ref{fig:nc_intro}, EM suppresses edge-induced messages toward zero, coupling information corruption with deterministic scale contraction. 
As this contraction accumulates across layers, the perturbed computation progressively departs from the clean message-scale regime, a phenomenon we term \textit{Scale Drift}. 
Hence, these findings motivate us to explore \textit{an alternative perturbation mechanism that corrupts the information carried by messages without explicitly reducing their scale}.

To this end, we introduce \textbf{Noise Corruption} (NC), a message-level perturbation mechanism that replaces zero-directed masking with matched-norm stochastic corruption. 
As show on the right of Figure \ref{fig:nc_intro}, NC combines each clean message with a scale-matched corrupted counterpart through a restoration gate, perturbing the information it carries while preserving its expected squared norm. 
It therefore defines a scale-stable corruption and restoration space that avoids the deterministic scale collapse of EM. 
However, since the corrupted counterparts are sampled stochastically, a fixed restoration configuration can produce different queried predictions across forward passes. 
NC thus provides the perturbation space, but not yet a stable explanation.
% In this work, we introduce \textbf{Noise Corruption} (NC), a message-level perturbation mechanism that replaces deterministic masking with stochastic noise injection. 
% \textcolor{blue}{Instead of asking which edges can be removed by suppressing their messages toward zero, we ask which messages must remain clean under controlled corruption. }
% As illustrated in Figure \ref{fig:nc_intro}, given a message induced by an edge, NC mixes the clean message with controlled noise, thereby perturbing message content without directly collapsing its scale. 
% In this way, NC provides a scale-stable alternative to conventional masking while still allowing the explainer to query how the target model reacts when the information carried by the graph is corrupted.

\begin{figure}
  \centering
  \includegraphics[width=0.45\textwidth]{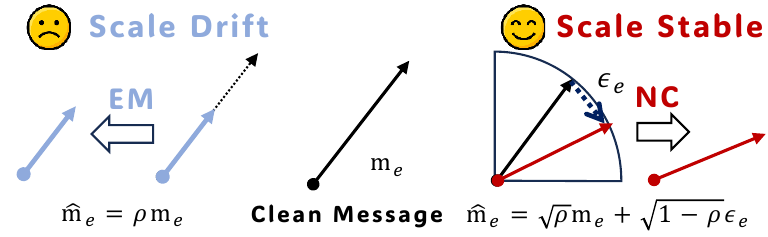} % 图片略小于容器
  \caption{Comparison \textbf{NC} with \textbf{EM} in the message passing.  }
  \label{fig:nc_intro}\vspace{-0.2cm}
\end{figure}

Building on NC, we propose NICE, a Noise Corruption based explanation framework to address two remaining questions separately: how much messages should be jointly restored under stochastic corruption and how their restoration should be attributed to individual edges. 
First, Stochastic Restoration Boundary Learning (SRB) learns a compact boundary by balancing the degradation and variation of the original target prediction against restoration compactness. 
Since the learned gates record restoration degrees rather than their prediction-level contributions, Boundary-Integrated Gradient (BIG) integrates the reduction in stochastic restoration risk along the path from the fully corrupted state to the learned boundary. 
NICE thus first locates a compact stochastic boundary and then converts the restoration process into attributions. 
Our main contributions as follows:
\begin{enumerate}[leftmargin=*,noitemsep]
    % \item We identify \textbf{Scale Drift} as a key source of distribution shift in perturbation-query GNN explainers, showing that it arises from the widely used  masking mechanism.
    \item We identify Scale Drift as a previously overlooked confounding factor in Perturb-Query paradigm, showing that it arises from the widely used  masking mechanism.
    
    % \item We introduce \textit{Noise Corruption} (NC), a scale-stable alternative to conventional masking. By replacing zero-directed message suppression with matched-norm random-direction corruption, NC perturbs the information carried by edge-induced messages while preserving their expected squared norm.
    \item We propose \textit{Noise Corruption} (NC), a scale-stable alternative which replaces zero-directed masking with matched-norm random-direction corruption, thereby defining a scale-stable corruption and restoration space.
    
    \item  Building on NC, we propose NICE, a noise corruption based explanation paradigm that first learns \textit{Stochastic Restoration Boundary} and then generates explanation scores through \textit{Boundary Integrated Gradient}. 
    
    \item Across eight benchmarks, NICE improves average Recall and AUC-ROC over the best baseline by 8.67\% and 7.57\% respectively and obtains higher retention fidelity.
\end{enumerate}

\begin{figure*}[tbp]
        \centering
        \includegraphics[width=1\linewidth]{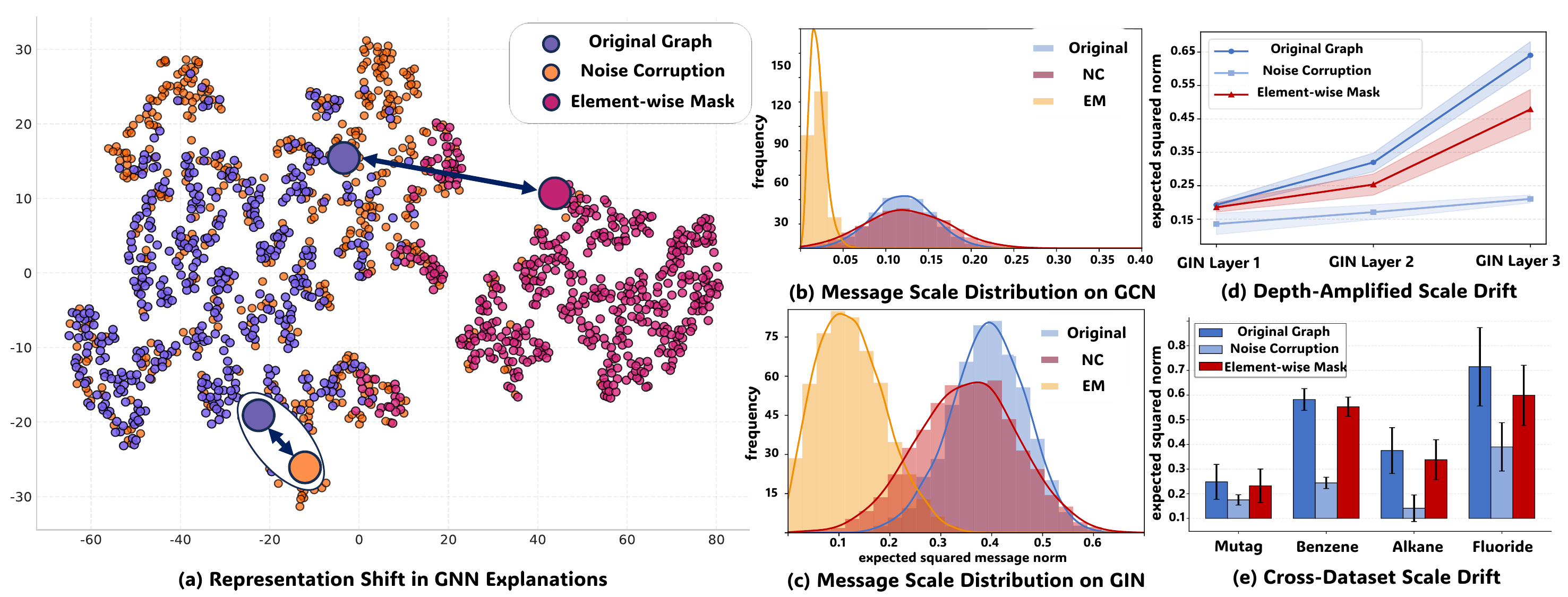}
        \caption{
\textbf{Preliminary experimental results.} 
(a) t-SNE visualization of graph embeddings from the original graph and the same ground-truth explanation instantiated with EM and NC. 
(b)-(c) Message scale distribution shift across GCN and GIN. 
(d) Layer-wise mean scale under different perturbation mechanisms. 
(e) Cross-dataset summary of message-scale statistics. 
}
        \label{fig:motivation}
        \vspace{-0.1cm}
\end{figure*}

\section{Preliminaries}\label{sec:preliminaries}

\paragraph{Notation.} 
Let $G=(\mathcal V,\mathcal E)$ be an input graph where $\mathcal{V} = \{ v_1, v_2, \dots, v_n \}$ denotes the node set and $\mathcal{E} \subseteq \mathcal{V} \times \mathcal{V}$ represents the edge set. 
In this work, we focus on explaining for graph classification tasks that GNN $f(G)\in\mathbb R^C$ was trained to output its prediction $y^*=\arg\max_c f_c(G)$. 
At layer $l$, the GNN computes edge-induced messages and aggregates them to update node representations. 
For an edge $e_{ij} = (v_i,v_j)\in\mathcal E$, we denote the message between them as
\begin{equation}
\mathbf m_{i\leftrightarrow j}^{(l)} = \phi^{(l)} ( \mathbf h_i^{(l)},\mathbf h_j^{(l)}, e_{ij} ),
\end{equation}
where $\mathbf h^{(l)}$ denotes layer-$l$ node representations and $\phi^{(l)}$ is the message function. 
We use the mean squared norm of edge-induced messages as a layer-wise proxy for message \emph{scale}.
\begin{equation}
\mathcal S^{(l)}(G)
=
\mathbb E_{(u,v)\in\mathcal E}
\left[
\left\|
\mathbf m_{u\leftrightarrow v}^{(l)}
\right\|_2^2
\right].
\label{eq:scale}
\end{equation}

\paragraph{Perturb-and-Query Explanation.}
Without lossing generality \cite{abrate2021counterfactual, he2022explainer}, we focus on edge-level explanation that the goal is to assigns each edge $e\in\mathcal E$ an attribution score $s_e\in[0,1]$ to indicate its contribution to the target prediction $y^*$. 
A broad family of explainers follows the \emph{perturb-and-query} paradigm that estimate edge importance by perturbing graph components and querying the response of the fixed target GNN. 
Since GNN predictions are produced through message passing, we formulate perturbation at the message level. 
For simplicity, we regard each edge $e=(v_i,v_j)\in\mathcal E$ as a message-passing edge and write $\mathbf m_e^{(l)}:=\mathbf m_{i\leftrightarrow j}^{(l)}$. 
A generic perturbation mechanism operating at the message-passing level can be written as
\begin{equation}
\widetilde{\mathbf m}_{e}^{(l)}
=
\Psi_{\rho_e}^{(l)}
\left(
\mathbf m_{e}^{(l)}
\right),
\label{eq:peturbation}
\end{equation}
where $\rho_e\in[0,1]$ is an intervention variable controlling how much the message induced by edge $e$ is perturbed and $\Psi_{\rho_e}^{(l)}$ denotes the corresponding perturbation mechanism.

\paragraph{Element-wise Masking.}
In conventional perturb-query explainers, the intervention variable is commonly implemented as a multiplicative mask on the induced message:
\begin{equation}
\Psi_{\mathrm{EM},\rho_e}^{(l)}
\left(
\mathbf m_e^{(l)}
\right)
=
\rho_e \mathbf m_e^{(l)} .
\label{eq:em}
\end{equation}
The learned mask coefficient is directly used as the attribution score, i.e., $s_e=\rho_e$. 
This coupling makes edge attribution operationally tied to multiplicative message suppression. 
In this work, we show that this perturbation mechanism can systematically alter the propagation scale of the target GNN.

\section{Rethinking the Perturbation Mechanism}\label{sec:rethinking_ood}

% To investigate the distribution shift in current perturb-query GNN explanation, we first revisit this problem under oracle explanations and then trace it back to the widely used element-wise masking mechanism that gives rise to Scale Drift.

In this section, we revisit the distribution shift in perturb-query explanation from the perspective of the perturbation mechanism itself. 
We first show that shift persists even when perturbations are instantiated from oracle explanations, suggesting that the issue can't be attributed solely to poor quality. 
We then trace this shift to element-wise masking which deterministically contracts message scale during propagation. 
This phenomenon which we call \emph{Scale Drift}, motivates the need for a scale-stable message perturbation mechanism.

\subsection{Distribution Shift Persists under Oracle Settings} \label{sec:oracle}

Existing efforts mainly attribute unreliable model responses on perturbed graphs to the quality of generated explanations or to the naturalness of perturbed graphs. 
Accordingly, prior efforts mainly improve the learned explanation scores, make perturbed graphs closer to the data distribution or stabilize the queried model responses after perturbation~\cite{dir, d4e}. 
However, these treatments don't isolate whether the observed shift is caused by imperfect explanation or by the perturbation mechanism itself. 

\paragraph{Oracle Experiments.} 
To disentangle these two possibilities, we remove the uncertainty from explanation learning by using ground-truth explanations as oracle explanations. 
Specifically, we instantiate perturbation directly from the ground-truth explanation and compare the resulting graph representations with those of the original graph. 
As shown in Figure~\ref{fig:motivation}(a), a clear representation shift still appears even under this oracle setting. 
This observation suggests that the discrepancy is not merely a byproduct of imperfect explanation learning. 
Even when the selected explanatory structure is correct, the way it is instantiated as a perturbation can still move the target GNN away from its original representation regime. 
We therefore turn to the perturbation mechanism shared by existing perturb-query explainers.

\subsection{Element-wise Masking Induces Scale Drift}\label{sec:scale_drift}
Under deletion-oriented perturbations as defined in Eq.~\ref{eq:em}, reducing message scale is a natural consequence of suppressing an edge. 
The ambiguity arises when the resulting prediction change is interpreted directly as the information contribution of that edge, because EM jointly removes the edge-specific message signal and its contribution to propagation scale.

\paragraph{Scale Drift.} This effect is visible in Figure~\ref{fig:motivation}~(b,c). 
Under the same explanation, EM systematically shifts the message scale distribution toward smaller values compared with the original graph. 
As the contracted messages are repeatedly transformed and propagated across layers, the perturbed computation gradually departs from the original propagation regime. 
We term this progressive mismatch as \emph{Scale Drift}. 

Let $\widetilde{\mathcal S}_{\mathrm{EM}}^{(l)}(G)$ denotes the layer-$l$ message scale of the EM-perturbed computations. We denote $\eta_e^{(l)}$ as the fraction of layer-$l$ scale carried by the message induced by edge $e$:
\begin{equation}
    \eta_e^{(l)} :=
\|  m_e^{(l)}\|_2^2/
\sum_{a\in\mathcal E}\| m_a^{(l)}\|_2^2 .
\end{equation}
The following theorem  formalizes EM-induced Scale Drift.

\begin{theorem}[\textbf{Layer-wise Scale Contraction under EM}]
\label{theorem_em}
For any layer $l$, applying EM with coefficients $\{\rho_e\in[0,1]\}_{e\in\mathcal E}$ yields
\begin{equation}
\frac{S_{\mathrm{EM}}^{(l)}(G)}{S^{(l)}(G)}
\leq
1-\sum_{e\in\mathcal E}(1-\rho_e^2)\eta_e^{(l)}
\label{eq:em_contraction}
\end{equation}
The contraction is strict whenever there exists an edge e such that $\eta_e^{(l)}>0$ and $\rho_e < 1$.
\end{theorem}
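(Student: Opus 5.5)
We compute the perturbed scale in closed form layer by layer and compare it with the clean one. The comparison is at a fixed layer $l$. We take the messages $\mathbf m_e^{(l)}$ entering that layer as given: they are computed from the representations $\mathbf h^{(l)}$ reaching the layer, and EM rescales them to $\rho_e\mathbf m_e^{(l)}$. With this reading, Eq.~\ref{eq:em} together with the definition of $\mathcal S^{(l)}$ in Eq.~\ref{eq:scale}, an average over the fixed edge set $\mathcal E$, gives
\begin{equation*}
\widetilde{\mathcal S}_{\mathrm{EM}}^{(l)}(G)=\frac{1}{|\mathcal E|}\sum_{e\in\mathcal E}\big\|\rho_e\mathbf m_e^{(l)}\big\|_2^2=\frac{1}{|\mathcal E|}\sum_{e\in\mathcal E}\rho_e^2\big\|\mathbf m_e^{(l)}\big\|_2^2 .
\end{equation*}
The edge set is not altered, since EM only rescales messages. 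The normalizer $|\mathcal E|$ is therefore the same as in $\mathcal S^{(l)}(G)$ and cancels in the ratio.

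Dividing by $\mathcal S^{(l)}(G)=\frac{1}{|\mathcal E|}\sum_a\|\mathbf m_a^{(l)}\|_2^2$ gives
\begin{equation*}
\frac{\widetilde{\mathcal S}_{\mathrm{EM}}^{(l)}(G)}{\mathcal S^{(l)}(G)}=\sum_{e}\rho_e^2\eta_e^{(l)}.
\end{equation*}
This requires $\mathcal S^{(l)}(G)>0$; otherwise the ratio is undefined, which we exclude as degenerate. Using $\sum_e\eta_e^{(l)}=1$, the right-hand side can be rewritten as $1-\sum_e(1-\rho_e^2)\eta_e^{(l)}$. This yields Eq.~\ref{eq:em_contraction}, in fact with equality under this per-layer reading. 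The stated ``$\le$'' remains valid and also covers any variant with extra nonnegative losses.

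For strictness, every term $(1-\rho_e^2)\eta_e^{(l)}$ is nonnegative, because $\rho_e\in[0,1]$ and $\eta_e^{(l)}\ge 0$. If some edge has $\eta_e^{(l)}>0$ and $\rho_e<1$, its term is strictly positive. The subtracted sum is then positive, so the ratio is strictly less than $1$, which is the claimed contraction.

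The main obstacle is interpretive rather than technical. If $\widetilde{\mathcal S}_{\mathrm{EM}}^{(l)}$ is meant as the scale of messages recomputed from the perturbed representations $\widetilde{\mathbf h}^{(l)}$, which depend on masking in earlier layers, then $\eta_e^{(l)}$ refers to clean messages while the perturbed ones differ, and the identity breaks. In that case I would state explicitly that the theorem concerns the single-layer effect of the mask applied to the messages entering layer $l$. That reading matches the word ``layer-wise'' and the definition of $\eta_e^{(l)}$. Accumulation across layers would then be discussed separately rather than inside this proof.
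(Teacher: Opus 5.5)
Your proof is correct and follows the paper's argument essentially step for step. The paper also holds the incoming representations at layer $l$ fixed and obtains the ratio as exactly $\sum_a \rho_a^2\eta_a^{(l)}$. It then rewrites this via $\sum_a\eta_a^{(l)}=1$, so the displayed bound is in fact an equality, and reads strictness as the ratio lying strictly below $1$ because some term $(1-\rho_e^2)\eta_e^{(l)}$ is positive. Your note that $\mathcal S^{(l)}(G)>0$ is needed for the ratio to be defined is a minor point the paper leaves implicit.
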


\noindent \textit{Proof.} See Appendix B. Theorem~\ref{theorem_em} shows that EM introduces a deterministic scale
contraction whenever a message with nonzero scale is suppressed. Repeated
contractions across layers lead to the following depth-dependent result.

\begin{corollary}[\textbf{Depth-amplified Scale Drift under EM}]
\label{corollary_depth}
Under the conditions of Theorem~\ref{theorem_em}, suppose an edge $e$
satisfies $\rho_e\in(0,1)$ and
$\eta_e^{(l)}\geq\underline{\eta}>0$ for all $l=1,\ldots,L$. Then,
\begin{equation}
-\sum_{l=1}^{L}
\log\frac{S_{\mathrm{EM}}^{(l)}(G)}{S^{(l)}(G)}
\geq
-L\log\!\left[1-(1-\rho_e^2)\underline{\eta}\right].
\label{eq:depth_scale_drift}
\end{equation}
Hence, the drift grows at least linearly with deeper layers.
\end{corollary}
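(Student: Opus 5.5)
The corollary follows by applying Theorem~\ref{theorem_em} separately at each layer and summing the resulting logarithmic bounds. Fix a layer $l\in\{1,\ldots,L\}$. Every summand $(1-\rho_a^2)\eta_a^{(l)}$ in Eq.~\eqref{eq:em_contraction} is nonnegative, because $\rho_a\in[0,1]$ and $\eta_a^{(l)}\ge 0$. So we may drop all terms except the one for the distinguished edge $e$:
\begin{equation*}
\frac{S_{\mathrm{EM}}^{(l)}(G)}{S^{(l)}(G)}
\le 1-\sum_{a\in\mathcal E}(1-\rho_a^2)\eta_a^{(l)}
\le 1-(1-\rho_e^2)\eta_e^{(l)}
\le 1-(1-\rho_e^2)\underline{\eta}.
\end{equation*}
The last inequality uses $1-\rho_e^2>0$, which holds since $\rho_e<1$, together with the hypothesis $\eta_e^{(l)}\ge\underline{\eta}$.

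Next we check that the logarithms are well defined. Since $\eta_e^{(l)}\le 1$, we have $\underline{\eta}\le 1$. Since $\rho_e>0$, we have $1-\rho_e^2<1$. Hence $c:=1-(1-\rho_e^2)\underline{\eta}$ satisfies $\rho_e^2\le c<1$, so $c$ is strictly positive. This is where the hypothesis $\rho_e\in(0,1)$ rather than $[0,1]$ is needed: with $\rho_e=0$ and $\underline\eta=1$ the right-hand side of Eq.~\eqref{eq:depth_scale_drift} would be infinite.

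The left-hand ratio also needs to be positive for $\log$ to be finite. The ratio is nonnegative, and $S^{(l)}(G)>0$ because $\eta_e^{(l)}>0$ requires a nonzero message at layer $l$. If the ratio equals $0$, we interpret $-\log 0=+\infty$, and the inequality holds trivially. Otherwise, since $-\log$ is decreasing, the per-layer bound gives $-\log\bigl(S_{\mathrm{EM}}^{(l)}(G)/S^{(l)}(G)\bigr)\ge -\log c$.

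Summing over $l=1,\ldots,L$ yields Eq.~\eqref{eq:depth_scale_drift}. Because $c<1$, the constant $-\log c$ is strictly positive, so the right-hand side equals $L\cdot(-\log c)$, which grows linearly in $L$. This justifies the ``at least linearly'' claim.

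There is no real obstacle in this argument; the only step needing care is positivity of the arguments of the logarithms, handled above. One subtlety is that $S_{\mathrm{EM}}^{(l)}$ at deeper layers depends on perturbations made at earlier layers, so $\eta_e^{(l)}$ could be read as computed on either the clean or the perturbed pass. The proof does not depend on this choice: Theorem~\ref{theorem_em} is a per-layer statement, and we apply it with whatever $\eta_e^{(l)}$ the hypothesis bounds below.
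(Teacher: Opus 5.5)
Your proposal is correct and matches the paper's proof essentially step for step: drop every nonnegative term except the one for edge $e$, bound the per-layer ratio by $q=1-(1-\rho_e^2)\underline{\eta}\in(0,1)$, use that $-\log$ is decreasing, and sum over the $L$ layers. Your check that $q\ge\rho_e^2>0$, which uses $\underline{\eta}\le 1$, spells out a positivity step the paper leaves implicit.
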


\noindent \textit{Proof.} See Appendix B. Corollary~\ref{corollary_depth} shows how repeated layer-wise contractions accumulate with depth, providing a mechanism-level explanation for the depth-amplified scale discrepancy observed in Figure~\ref{fig:motivation}(d).

\paragraph{Empirical Evidence.}
We empirically examine the above mechanism in Figure~\ref{fig:motivation}. 
Under the same oracle explanation, EM consistently shifts the message-norm distribution toward smaller values than the original graph as shown in (b) and (c). 
This local contraction further develops into a layer-wise scale gap.  
As shown in (d), the discrepancy between EM and the original graph becomes more pronounced across GNN layers which is consistent with the depth-amplified behavior. 
The same pattern also appears across datasets and backbones in (e), suggesting that Scale Drift is not an isolated artifact but a recurring consequence observed across the datasets.

Taken together, the oracle analysis, theoretical result, and empirical evidence show that EM introduces a confounding factor into perturb-query explanation. 
A prediction change under EM may not only reflect the removal of explanatory information, but also arise from the collapse of propagation scale. 
This ambiguity weakens the reliability of queried model responses, motivating a scale-stable perturbation mechanism that perturbs edge-induced message content without deterministically shrinking message scale. 
\definecolor{best}{HTML}{85C193}
\definecolor{second}{HTML}{D4EFDF}
\colorlet{best}{white}
\colorlet{second}{white}
\begin{table*}[!t]
  \centering
\resizebox{\textwidth}{!}{
\begin{tabular}{cc|cccccccc|c}
\toprule
\textbf{Metric} & \textbf{Method} & \textbf{Mutag} & \textbf{Benzene} & \textbf{Alkane} & \textbf{Fluoride} & \textbf{Indole} & \textbf{\textsc{PAINS}} & \textbf{R-Count} & \textbf{R-Max} & \textbf{Rank} \\
\midrule
\multirow{10}{*}{\textbf{F1}}
& Random
& \mstd{20.06}{0.55} & \mstd{30.80}{0.40} & \mstd{8.90}{1.09} & \mstd{24.52}{0.43} & \mstd{29.49}{0.10} & \mstd{29.00}{0.23} & \mstd{41.23}{0.08} & \mstd{34.95}{0.47} & 10 \\
& Saliency
& \mstd{49.00}{0.07} & \mstd{47.29}{0.01} & \mstd{0.00}{0.00} & \mstd{48.09}{0.00} & \mstd{50.03}{0.00} & \mstd{49.97}{0.01} & \mstd{37.97}{0.01} & \mstd{46.97}{0.00} & 6 \\
& GuidedBP
& \mstd{37.50}{0.00} & \mstd{45.31}{0.01} & \mstd{15.17}{0.00} & \mstd{41.68}{0.00} & \mstd{42.67}{0.00} & \mstd{51.66}{0.01} & \mstd{38.77}{0.01} & \mstd{47.03}{0.00} & 5 \\
& GNNExplainer
& \mstd{31.79}{1.90} & \mstd{25.88}{1.00} & \cellcolor{best}\textbf{\mstd{32.16}{2.21}} & \mstd{30.95}{0.49} & \mstd{42.22}{0.50} & \mstd{38.97}{0.04} & \mstd{51.50}{0.09} & \mstd{45.80}{1.01} & 8 \\
& PGExplainer
& \mstd{51.67}{0.24} & \mstd{64.96}{0.18} & \mstd{26.39}{0.00} & \cellcolor{second}\underline{\mstd{61.79}{0.40}} & \mstd{46.65}{0.03} & \mstd{45.98}{0.00} & \mstd{52.65}{0.00} & \cellcolor{second}\underline{\mstd{53.17}{0.00}} & 3 \\
& Refine
& \mstd{43.60}{0.82} & \mstd{42.88}{0.31} & \mstd{11.48}{0.00} & \mstd{30.36}{2.42} & \mstd{37.39}{0.06} & \mstd{54.18}{0.07} & \cellcolor{second}\underline{\mstd{53.73}{0.00}} & \mstd{51.67}{0.00} & 4 \\
& D4Explainer
& \mstd{30.68}{0.57} & \mstd{61.91}{0.22} & \mstd{26.58}{0.77} & \mstd{28.25}{1.02} & \mstd{36.29}{0.43} & \mstd{40.50}{0.10} & \mstd{50.96}{0.18} & \mstd{46.80}{1.38} & 9 \\
& ProxyExplainer
& \mstd{36.33}{0.00} & \mstd{49.52}{0.01} & \mstd{14.28}{0.00} & \mstd{38.22}{0.00} & \mstd{38.53}{0.12} & \mstd{37.18}{0.02} & \mstd{50.83}{0.00} & \mstd{51.29}{0.02} & 7 \\
& ConfExplainer
& \cellcolor{second}\underline{\mstd{52.19}{0.00}} & \cellcolor{second}\underline{\mstd{73.45}{0.01}} & \mstd{24.99}{0.00} & \mstd{49.97}{0.06} & \cellcolor{second}\underline{\mstd{57.82}{0.00}} & \cellcolor{second}\underline{\mstd{56.04}{0.00}} & \mstd{53.27}{0.00} & \mstd{51.97}{0.00} & 2 \\

& {\textbf{NICE (Ours)}}
& \cellcolor{best}\textbf{\mstd{53.90}{0.15}} & \cellcolor{best}\textbf{\mstd{78.49}{0.04}} & \cellcolor{second}\underline{\mstd{27.31}{0.12}} & \cellcolor{best}\textbf{\mstd{65.49}{0.05}} & \cellcolor{best}\textbf{\mstd{68.66}{0.00}} & \cellcolor{best}\textbf{\mstd{59.38}{0.01}} & \cellcolor{best}\textbf{\mstd{59.47}{0.00}} & \cellcolor{best}\textbf{\mstd{58.37}{0.00}} & 1 \\

\midrule
\multirow{10}{*}{\textbf{Recall}}
& Random
& \mstd{39.52}{1.33} & \mstd{32.71}{0.31} & \mstd{33.89}{4.29} & \mstd{35.55}{0.73} & \mstd{29.93}{0.13} & \mstd{29.88}{0.34} & \mstd{30.00}{0.06} & \mstd{40.11}{0.32} & 10 \\
& Saliency
& \mstd{90.13}{0.12} & \mstd{50.85}{0.02} & \mstd{0.00}{0.00} & \mstd{69.21}{0.00} & \mstd{53.34}{0.00} & \mstd{54.70}{0.01} & \mstd{27.67}{0.00} & \mstd{57.19}{0.00} & 5 \\
& GuidedBP
& \mstd{70.89}{0.00} & \mstd{48.52}{0.01} & \mstd{56.19}{0.00} & \mstd{57.89}{0.00} & \mstd{44.17}{0.00} & \mstd{53.26}{0.00} & \mstd{28.26}{0.00} & \mstd{56.94}{0.00} & 7 \\
& GNNExplainer
& \mstd{59.37}{3.67} & \mstd{50.71}{2.01} & \mstd{59.99}{4.20} & \mstd{59.65}{1.68} & \mstd{45.68}{0.37} & \mstd{41.29}{0.14} & \mstd{37.99}{0.40} & \mstd{53.24}{1.44} & 6 \\
& PGExplainer
& \cellcolor{second}\underline{\mstd{91.52}{0.29}} & \mstd{66.77}{0.24} & \cellcolor{second}\underline{\mstd{94.69}{0.00}} & \cellcolor{second}\underline{\mstd{84.97}{0.58}} & \mstd{49.43}{0.05} & \mstd{49.43}{0.05} & \mstd{37.89}{0.00} & \mstd{57.87}{0.00} & 4 \\
& Refine
& \mstd{40.24}{0.19} & \mstd{44.72}{0.28} & \mstd{89.38}{0.00} & \mstd{67.07}{1.80} & \mstd{35.91}{0.05} & \mstd{55.58}{0.08} & \cellcolor{second}\underline{\mstd{40.62}{0.01}} & \mstd{60.03}{0.00} & 3 \\
& D4Explainer
& \mstd{57.15}{1.18} & \cellcolor{best}\textbf{\mstd{96.83}{0.08}} & \mstd{93.95}{1.11} & \mstd{43.15}{1.49} & \mstd{35.30}{0.33} & \mstd{39.79}{0.16} & \mstd{35.40}{0.52} & \mstd{52.26}{0.92} & 8 \\
& ProxyExplainer
& \mstd{56.82}{0.00} & \mstd{52.24}{0.01} & \mstd{50.00}{0.00} & \mstd{56.42}{0.00} & \mstd{38.63}{0.07} & \mstd{40.36}{0.01} & \mstd{35.98}{0.00} & \mstd{57.06}{0.03} & 9 \\
& ConfExplainer
& \mstd{87.99}{0.00} & \mstd{76.38}{0.01} & \mstd{93.36}{0.00} & \mstd{67.91}{0.01} & \cellcolor{second}\underline{\mstd{59.87}{0.00}} & \cellcolor{second}\underline{\mstd{57.56}{0.01}} & \mstd{40.26}{0.00} & \cellcolor{second}\underline{\mstd{61.17}{0.00}} & 2 \\

& {\textbf{NICE (Ours)}}
& \cellcolor{best}\textbf{\mstd{93.92}{0.12}} & \cellcolor{second}\underline{\mstd{82.60}{0.08}} & \cellcolor{best}\textbf{\mstd{97.64}{0.51}} & \cellcolor{best}\textbf{\mstd{88.50}{0.05}} & \cellcolor{best}\textbf{\mstd{76.56}{0.00}} & \cellcolor{best}\textbf{\mstd{61.93}{0.00}} & \cellcolor{best}\textbf{\mstd{44.26}{0.01}} & \cellcolor{best}\textbf{\mstd{68.42}{0.00}} & 1 \\

\midrule
\multirow{10}{*}{\textbf{AUC}}
& Random
& \mstd{55.47}{0.72} & \mstd{51.94}{0.24} & \mstd{52.05}{2.24} & \mstd{53.33}{0.40} & \mstd{49.94}{0.08} & \mstd{49.97}{0.21} & \mstd{50.01}{0.11} & \mstd{50.47}{0.19} & 10 \\
& Saliency
& \mstd{84.86}{0.02} & \mstd{64.17}{0.01} & \mstd{34.32}{0.00} & \mstd{73.46}{0.00} & \mstd{66.32}{0.00} & \mstd{67.33}{0.01} & \mstd{45.95}{0.00} & \mstd{67.37}{0.00} & 4 \\
& GuidedBP
& \mstd{73.85}{0.00} & \mstd{62.59}{0.01} & \mstd{63.76}{0.00} & \mstd{66.99}{0.00} & \mstd{60.33}{0.00} & \mstd{66.93}{0.00} & \mstd{46.85}{0.01} & \mstd{67.17}{0.00} & 8 \\
& GNNExplainer
& \mstd{66.84}{2.07} & \mstd{61.68}{1.10} & \mstd{67.22}{2.37} & \mstd{66.78}{1.00} & \mstd{60.92}{0.27} & \mstd{58.12}{0.08} & \mstd{62.97}{0.17} & \mstd{65.33}{0.83} & 9 \\

& PGExplainer
& \cellcolor{second}\underline{\mstd{85.55}{0.17}} & \mstd{76.40}{0.14} & \mstd{84.03}{0.00} & \cellcolor{second}\underline{\mstd{83.50}{0.35}} & \mstd{63.84}{0.03} & \mstd{64.44}{0.00} & \mstd{63.53}{0.00} & \mstd{69.15}{0.00} & 3 \\

& Refine
& \mstd{56.38}{0.13} & \mstd{60.44}{0.19} & \mstd{60.18}{0.00} & \mstd{69.99}{1.04} & \mstd{54.79}{0.04} & \mstd{68.90}{0.05} & \mstd{65.65}{0.03} & \mstd{69.91}{0.00} & 6 \\
& D4Explainer
& \mstd{65.40}{0.66} & \mstd{71.36}{0.08} & \cellcolor{second}\underline{\mstd{83.68}{0.62}} & \mstd{57.74}{0.85} & \mstd{64.08}{0.24} & \mstd{57.36}{0.10} & \mstd{63.25}{0.66} & \mstd{65.09}{0.69} & 7 \\

& ProxyExplainer
& \mstd{66.44}{0.00} & \mstd{65.38}{0.00} & \mstd{60.54}{0.00} & \mstd{65.64}{0.00} & \cellcolor{second}\underline{\mstd{76.43}{0.06}} & \mstd{57.26}{0.01} & \mstd{63.38}{0.00} & \mstd{68.18}{0.02} & 5 \\

& ConfExplainer
& \mstd{84.24}{0.00} & \cellcolor{second}\underline{\mstd{82.74}{0.01}} & \mstd{83.25}{0.00} & \mstd{73.34}{0.01} & \mstd{71.66}{0.00} & \cellcolor{second}\underline{\mstd{69.93}{0.01}} & \cellcolor{second}\underline{\mstd{75.49}{0.01}} & \cellcolor{second}\underline{\mstd{70.71}{0.00}} & 2 \\

& {\textbf{NICE (Ours)}}
& \cellcolor{best}\textbf{\mstd{87.08}{0.11}} & \cellcolor{best}\textbf{\mstd{86.62}{0.04}} & \cellcolor{best}\textbf{\mstd{85.59}{0.27}} & \cellcolor{best}\textbf{\mstd{85.86}{0.03}} & \cellcolor{best}\textbf{\mstd{82.94}{0.00}} & \cellcolor{best}\textbf{\mstd{76.49}{0.01}} & \cellcolor{best}\textbf{\mstd{83.17}{0.01}} & \cellcolor{best}\textbf{\mstd{84.15}{0.00}} & 1 \\
\bottomrule
\end{tabular}
}
\caption{
\textbf{Explanation performance of NICE.}
We retain the Top30\% edges and compare them against ground-truth edges.
Results are reported as Mean $\pm$ Standard Deviation.
The \colorbox{best}{\textbf{best}} is bold, and the \colorbox{second}{\underline{second-best}} is underlined.
}
\vspace{-0.3cm}
\label{tab:main_results}
\end{table*}

\section{Methodology}\label{sec:method}

In this section, We present NICE, a Noise Corruption-based framework that reformulates edge explanation as a stochastic restoration process. 
Rather than suppressing edge-induced messages toward zero, NICE starts from matched-norm corrupted counterparts and examines how clean-message directions should be restored to recover the original target prediction. 
Procedurally, NICE consists of three components. 
First, it defines a scale-stable perturbation space through \emph{Noise Corruption} (NC), which replaces multiplicative suppression with matched-norm stochastic corruption. 
Second, \emph{Stochastic Restoration Boundary Learning} (SRB) interprets the NC coefficient as a restoration gate and learns a compact boundary that preserves the target prediction under stochastic corruption. 
Third, since this boundary specifies a restoration configuration, \emph{Boundary-Integrated Gradients} (BIG) decodes it into importance scores by measuring the element-wise contribution along the restoration path. 

\subsection{Scale-Stable Noise Corruption}
\label{sec:nc}
The diagnosis above suggests that a desirable alternative should perturb the information carried by edge-induced messages without deterministically shrinking
their scale defined in Eq.~\eqref{eq:scale}. 
To this end, NICE introduces \emph{Noise Corruption} (NC), which replaces zero-directed suppression with matched-norm random-direction corruption. 
For the message $\mathbf m_e^{(l)}\in\mathbb R^{d_l}$ induced by edge $e$ at layer $l$,  we sample a corrupted counterpart from the sphere with the same norm:
\begin{equation} 
\boldsymbol\epsilon_e^{(l)} \sim \mathcal P_e^{(l)}:= 
\mathrm{Unif} \left( \left\{ \mathbf z\in\mathbb R^{d_l}: \|\mathbf z\|_2=\|\mathbf m_e^{(l)}\|_2 \right\} \right). 
\label{eq:nc_noise} 
\end{equation} 
We use $\mathcal P_{\mathrm{NC}}$ to denote the joint distribution of all corrupted counterparts sampled across edges and message-passing layers in one NC forward pass. 
Given a restoration gate $\rho_e\in[0,1]$, NC corrupts the message with the noise:
\begin{equation}
\Psi_{\mathrm{NC},\rho_e}^{(l)} \left( \mathbf m_e^{(l)} \right) =
\sqrt{\rho_e}\mathbf m_e^{(l)} + \sqrt{1-\rho_e}\boldsymbol\epsilon_e^{(l)}.
\label{eq:nc}
\end{equation}
Here, $\rho_e$ controls how far the message is restored toward its clean direction: $\rho_e=0$ gives a fully corrupted counterpart, whereas $\rho_e=1$ recovers the clean message. 
Therefore, Varying $\rho_e$ defines a continuous restoration path within the matched-norm corruption space. 
The proposition shows that the restoration path satisfies the scale-stability requirement.

\begin{proposition}[\textbf{Scale-Stable Restoration Path under NC}]
\label{prop:nc_scale_stability}
% \begin{proposition}[Scale Stability of NC]
For any fixed message $\mathbf m_e^{(l)}$ and restoration gate
$\rho_e\in[0,1]$,
\begin{equation}
\mathbb E_{\boldsymbol\epsilon_e^{(l)}}
\left[
\left\|
\Psi_{\mathrm{NC},\rho_e}^{(l)}
\left(\mathbf m_e^{(l)}\right)
\right\|_2^2
\right]
=
\left\|\mathbf m_e^{(l)}\right\|_2^2.
\end{equation}
\end{proposition}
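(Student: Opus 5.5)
The plan is to expand the squared norm of $\Psi_{\mathrm{NC},\rho_e}^{(l)}(\mathbf m_e^{(l)})=\sqrt{\rho_e}\,\mathbf m_e^{(l)}+\sqrt{1-\rho_e}\,\boldsymbol\epsilon_e^{(l)}$ and take the expectation term by term, writing $\mathbf m$ for $\mathbf m_e^{(l)}$ and $\boldsymbol\epsilon$ for $\boldsymbol\epsilon_e^{(l)}$. Bilinearity of the inner product gives
\begin{equation*}
\bigl\|\sqrt{\rho_e}\,\mathbf m+\sqrt{1-\rho_e}\,\boldsymbol\epsilon\bigr\|_2^2
=\rho_e\|\mathbf m\|_2^2+(1-\rho_e)\|\boldsymbol\epsilon\|_2^2+2\sqrt{\rho_e(1-\rho_e)}\,\langle \mathbf m,\boldsymbol\epsilon\rangle .
\end{equation*}
Since $\mathbf m$ and $\rho_e$ are fixed, only the last two terms are random.

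For the second term, every draw from $\mathcal P_e^{(l)}$ in Eq.~\eqref{eq:nc_noise} lies on the sphere of radius $\|\mathbf m\|_2$. Hence $\|\boldsymbol\epsilon\|_2^2=\|\mathbf m\|_2^2$ almost surely, and its expectation is exactly $\|\mathbf m\|_2^2$.

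The only real step is the cross term, where we need $\mathbb E[\boldsymbol\epsilon]=\mathbf 0$. I would argue this by symmetry. The uniform distribution on a centered sphere is invariant under the reflection $\mathbf z\mapsto-\mathbf z$, so $\boldsymbol\epsilon$ and $-\boldsymbol\epsilon$ have the same law. Therefore $\mathbb E[\boldsymbol\epsilon]=-\mathbb E[\boldsymbol\epsilon]$, which forces $\mathbb E[\boldsymbol\epsilon]=\mathbf 0$. The expectation exists because $\boldsymbol\epsilon$ is bounded. By linearity, $\mathbb E\langle\mathbf m,\boldsymbol\epsilon\rangle=\langle\mathbf m,\mathbb E\boldsymbol\epsilon\rangle=0$.

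The degenerate case $\mathbf m=\mathbf 0$ also needs a check. There the sphere collapses to $\{\mathbf 0\}$, so $\boldsymbol\epsilon=\mathbf 0$ and both sides of the claim are zero. Summing the three terms then gives $\rho_e\|\mathbf m\|_2^2+(1-\rho_e)\|\mathbf m\|_2^2=\|\mathbf m\|_2^2$, which is the claim. Note also that the endpoint cases $\rho_e\in\{0,1\}$ need no separate treatment, since the coefficient $\sqrt{\rho_e(1-\rho_e)}$ is then zero anyway.
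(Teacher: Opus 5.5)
Your proof is correct and follows the paper's argument essentially step for step. You expand the squared norm, use $\|\boldsymbol\epsilon\|_2^2=\|\mathbf m\|_2^2$ almost surely, and eliminate the cross term via the origin-symmetry of the uniform spherical law; your explicit reflection argument and the check of the degenerate case $\mathbf m=\mathbf 0$ are minor additions that the paper leaves implicit.
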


\noindent\textit{Proof.}
The result follows by expanding Eq.~\eqref{eq:nc} and the full proof is provided in Appendix B. 
By preserving message scale in expectation, NC excludes the systematic scale-shrinkage factor introduced by EM, allowing queried prediction changes to be more directly attributed to the perturbation of edge-induced message information. 
However, NC alone does not specify an explanation. 
What remains is to identify a compact configuration of restoration gates whose queried behavior remains sufficiently close to the original target decision under stochastic noise corruption.

\subsection{Stochastic Restoration Boundary Learning}\label{sec:srb}

Building on the NC restoration path, we formulate the above problem as \emph{Stochastic Restoration Boundary Learning} (SRB). 
For the input graph $G=(\mathcal V,\mathcal E)$ and its target prediction $y^*$, we seeks a joint restoration configuration $ \boldsymbol{\rho}=\{\rho_e\}_{e\in E} $. 
As defined in Eq.~\eqref{eq:nc}, each \(\rho_e\) controls how far the corresponding edge-induced message is restored from its corrupted counterpart toward its clean direction. 
The objective is to restore only the clean-message directions necessary for maintaining the original target prediction.

\paragraph{Restoration Sufficiency under NC.}
For a fixed restoration configuration \(\boldsymbol{\rho}\), the queried prediction remains stochastic due to each NC forward pass draws corrupted counterparts from \(\mathcal P_{\mathrm{NC}}\). 
For one NC samples \(\boldsymbol{\epsilon}\sim\mathcal P_{\mathrm{NC}}\), let \(f(G;\boldsymbol{\rho},\boldsymbol{\epsilon})\) denote the corresponding target-class probability. 
We measure how well \(\boldsymbol{\rho}\) restores the target predictions through We the one-sided target-prediction degradation:
\begin{equation} 
d_{y^*}(\boldsymbol\rho,\boldsymbol\epsilon)
=
\left[
\log f_{y^*}(G)
-
\log f_{y^*}(\boldsymbol\rho,\boldsymbol\epsilon)
\right]_+.
\label{eq:srb_behavior_deviation} \end{equation}
The deviation is zero when the corrupted computation preserves or strengthens the target-class probability, and increases only when support for the original prediction degrades.
It therefore evaluates restoration sufficiency with respect to the target decision without requiring the complete queried output to reproduce the clean prediction.

The same restoration configuration may induce different deviations across
noise realizations. We account for both their average magnitude and
sampling-induced variation through the stochastic restoration error
\begin{equation} 
\mathcal R_{y^*}(\boldsymbol{\rho}) = \mathbb E_{\boldsymbol{\epsilon}} \left[ d_{y^*}(\boldsymbol{\rho},\boldsymbol{\epsilon}) \right] + \beta\, \operatorname{Std}_{\boldsymbol{\epsilon}} \left[ d_{y^*}(\boldsymbol{\rho},\boldsymbol{\epsilon}) \right], 
\label{eq:srb_restoration_risk} 
\end{equation}
where \(\beta\geq0\) controls the penalty on corruption-induced variation.
A small \(\mathcal R(\boldsymbol{\rho})\) indicates that the target
behavior is restored both accurately and consistently under NC.

\paragraph{Restoration boundary learning.}
SRB balances stochastic restoration risk against restoration compactness by:
\begin{equation}
    \mathcal L_{\mathrm{SRB}}(\boldsymbol{\rho})
    =
    \mathcal R_{y^*}(\boldsymbol{\rho})
    +
    \lambda_{\mathrm{rest}}\|\boldsymbol{\rho}\|_1,
    \label{eq:srb_objective}
\end{equation}
where $\lambda_{\mathrm{rest}}\geq 0$ controls the trade-off between
restoring the target prediction and retaining a compact restoration
configuration. The resulting stochastic restoration boundary is
\begin{equation}
    \boldsymbol{\rho}^{\star}
    =
    \arg\min_{\boldsymbol{\rho}\in[0,1]^{|\mathcal E|}}
    \mathcal L_{\mathrm{SRB}}(\boldsymbol{\rho}).
    \label{eq:srb_solution}
\end{equation}

In practice, we estimate $\mathcal R_{y^*}(\boldsymbol{\rho})$ using
$N$ independent NC samples. Let
$\boldsymbol{\epsilon}^{(n)}\sim\mathcal P_{\mathrm{NC}}$ and
$d_n=d_{y^*}(\boldsymbol{\rho},\boldsymbol{\epsilon}^{(n)})$ for
$n=1,\ldots,N$. The empirical restoration risk is
\begin{equation}
    \widehat{\mathcal R}_{y^*}^{(N)}(\boldsymbol{\rho})
    =
    \bar d_N
    +
    \beta
    \sqrt{
        \frac{1}{N}
        \sum_{n=1}^{N}(d_n-\bar d_N)^2
    },
    \bar d_N=\frac{1}{N}\sum_{n=1}^{N}d_n .
    \label{eq:empirical_restoration_risk}
\end{equation}
At each optimization step, we resample the $N$ noise realizations and
optimize the empirical counterpart of Eq.~\ref{eq:srb_objective} by backpropagating through
$\widehat{\mathcal R}_{y^*}^{(N)}(\boldsymbol{\rho})$.

\subsection{Boundary-Integrated Gradient}
\label{sec:big}

Although SRB yields a compact restoration boundary $\boldsymbol{\rho}^{\star}$, each $\rho_e^{\star}$ only records how much the clean-message direction of edge $e$ is restored under the global restoration objective. 
To bridge this gap, we propose \emph{Boundary-Integrated Gradient} (BIG), which integrates this contribution from the fully corrupted state to the learned boundary along:
\begin{equation} 
\boldsymbol{\rho}(t) = t\boldsymbol{\rho}^{\star}, \qquad t\in[0,1]. 
\label{eq:big_path} 
\end{equation} 
The attribution score of edge \(e\) is then computed as 
\begin{equation} 
s_e^{\mathrm{BIG}} = -\rho_e^{\star} \int_0^1 \left. \frac{\partial\mathcal R_{y^*}(\boldsymbol{\rho})} {\partial\rho_e} \right|_{\boldsymbol{\rho}=\boldsymbol{\rho}(t)} \mathrm dt. 
\label{eq:big_score} 
\end{equation} 
The gradient measures how restoring edge \(e\) changes the stochastic restoration risk, while the integration accumulates this effect along the restoration path. 
In practice, we approximate the integral in Eq.~\ref{eq:big_score} using a Riemann sum over $T$ uniformly spaced points along the restoration path, where $T$ denotes the number of integration steps.

\section{Experimental Study}

We design our experiments to answer following questions. 
\textbf{RQ1:} How effective is NICE in terms of agreement with ground-truth and model faithfulness?
\textbf{RQ2:} How do EM and NC differ in their effects on message scale and the resulting explanations? 
\textbf{RQ3:} Under stochastic NC, can SRB learn a compact restoration boundary that restores the target behavior consistently across noise realizations?
\textbf{RQ4:} Is BIG necessary for decoding the learned restoration boundary?

\begin{figure*}[tbp]
    \centering
    \includegraphics[width=1\linewidth]{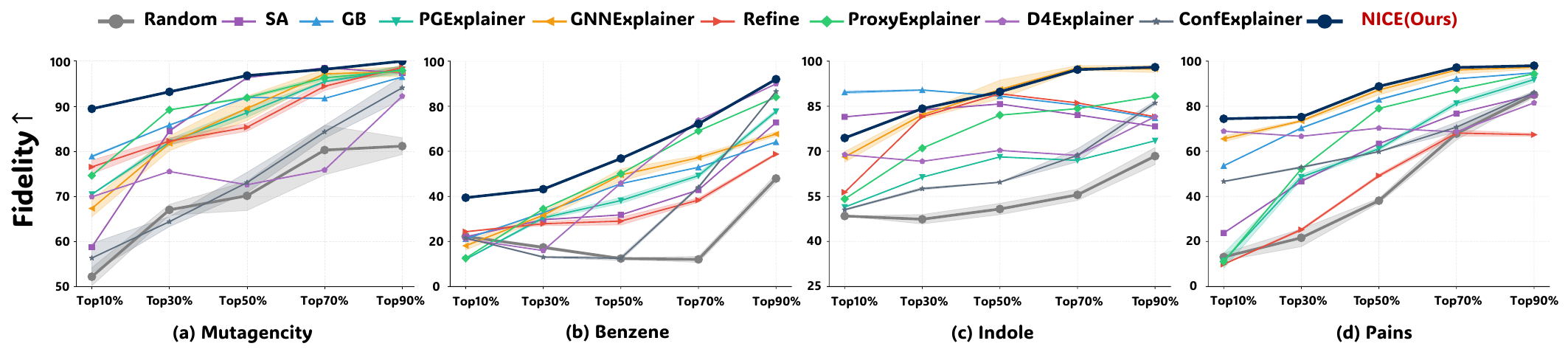}
\caption{
\textbf{Fidelity under different sparsity budgets.}
We vary the explanation sparsity from Top10\% to Top90\% and measure the resulting Fidelity. 
Higher values indicate that the explanation better preserves the target model prediction. 
}
    \label{fig:fidelity_curve} \vspace{-0.2cm}
\end{figure*}

\subsection{Experimental Setup}
\textbf{Datasets.} 
Following prior work \cite{vinfor}, we We evaluate on eight graph-classification benchmarks with annotated explanatory substructures, including four widely used molecular datasets namely Mutagenicity (Mg),  Benzene (Bz), Alkane-Carbonyl (AC) and Fluoride-Carbonyl (FC) \cite{graphxai}.
Additionally include seven tasks from the more challenging \textsc{B-XAIC} benchmark, namely Indole, PAINS, Rings-Count and Rings-Max \cite{B-XAIC}. 
Details are provided in Appendix C. 

\noindent \textbf{Baselines.} 
Three types of baselines are adopted. 
(1) \emph{Reference explainers}: GTExplainer which returns the ground-truth explanations as an oracle reference and Random explains randomly as a weak reference. 
(2) \emph{White-box explainers}: gradient-based methods including Saliency~\cite{SA} and GuidedBackprop~\cite{GB}. 
(3) \emph{Black-box explainers}: methods follow the Perturb-Query paradigm, including GNNExplainer~\cite{survey-counter}, PGExplainer~\cite{burkart2021survey}, ReFine~\cite{ReFine}, D4Explainer~\cite{d4e}, ProxyExplainer~\cite{proxy}, ConfExplainer~\cite{Zhang2025IsYE}. 
More details see Appendix D.

\noindent \textbf{Evaluation Protocol.} 
We train a GIN classifier as the target model for graph-classification tasks and explain its predictions. 
We implement explanation evaluation align with IDEA \cite{yin2026paradig} and reporting Precision, Recall, F1 and AUC-ROC. 
To assess whether explanations remain faithful to the target model, we further report the widely used Fidelity \cite{robust_f, global_concept_ex}. 

\noindent \textbf{Implementation Details.}
Following PGExplainer \cite{pge}, we parameterize each restoration gate with an MLP over its edge representation and obtain $\rho_e\in[0,1]$. 
The  parameters are optimized using Adam with a learning rate of $0.001$. 
By default, we use $16$ NC samples to estimate the restoration risk, set $\beta=0.1$ and $\lambda_{\mathrm{rest}}=1.0$ and approximate BIG with $16$ integration steps. Results are averaged over $5$ random seeds. Details are provided in Appendix E.

\begin{figure*}[tbp]
    \centering
    \includegraphics[width=1\linewidth]{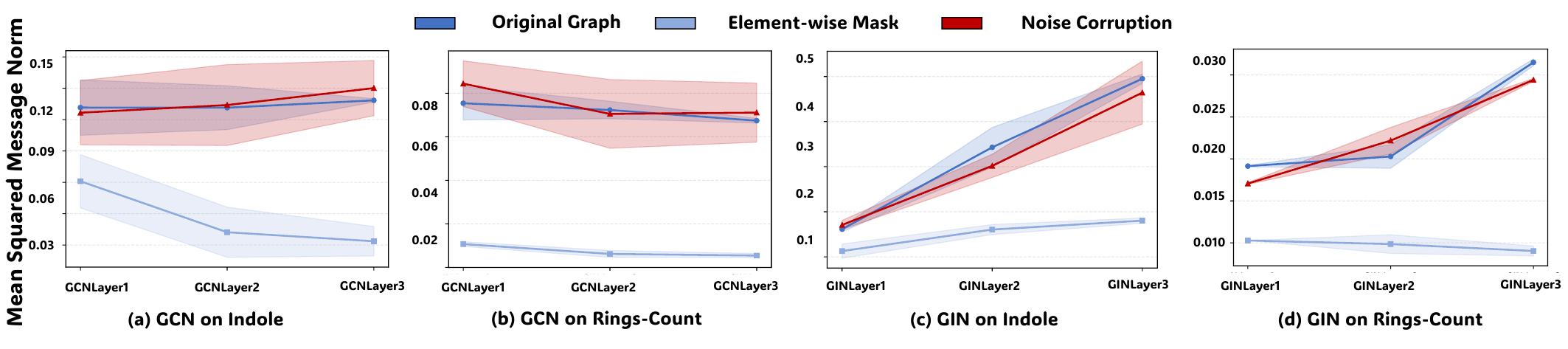}
\caption{
\textbf{Layer-wise message scale under the same GTExplainer configuration.} EM causes persistent message-scale contraction, whereas NC remains closer to the clean computation across datasets and GNN backbones.
}
    \label{fig:exp_fig_2} \vspace{-0.2cm}
\end{figure*}

\subsection{Overall Explanation Effectiveness (RQ1)}
We evaluate NICE from two complementary perspectives.

\begin{table}[htbp]
\centering
\label{tab:primitive_control}
\resizebox{\columnwidth}{!}{
\begin{tabular}{c c c c c c}
\toprule
& & \multicolumn{2}{c}{$D_{\mathrm{repr}}\downarrow$}
& \multicolumn{2}{c}{$D_{\mathrm{pred}}\downarrow$} \\
\cmidrule(lr){3-4}
\cmidrule(lr){5-6}
Dataset & Method & GT & Rand & GT & Rand \\
\midrule
\multirow{3}{*}{R-Count}
& EM & \mstd{0.31}{0.09} & \mstd{0.63}{0.22} & \mstd{0.57}{0.15} & \mstd{0.59}{0.12}\\
& NC & \mstd{0.07}{0.02} & \mstd{0.08}{0.02} & \mstd{0.02}{0.01} & \mstd{0.04}{0.04}\\
& Imp. & 77.42\% & 87.30\% & 96.49\% & 93.22\%  \\
\midrule
\multirow{3}{*}{R-Max}
& EM & \mstd{0.45}{0.10} & \mstd{0.61}{0.11} & \mstd{0.36}{0.16} & \mstd{0.48}{0.18}\\
& NC & \mstd{0.16}{0.03} & \mstd{0.16}{0.04} & \mstd{0.14}{0.05} & \mstd{0.13}{0.06}\\
& Imp. & 64.44\% & 73.77\% & 61.11\% & 72.92\%  \\
\bottomrule
\end{tabular}
}
\caption{
\textbf{Controlled comparison of EM and NC under GTExplainer and Random.} 
$D_{\mathrm{repr}}$ and $D_{\mathrm{pred}}$ measure representation distance and target-prediction normalized degradation. 
\textit{Imp.} denotes the relative reduction ratio.
}
\vspace{-0.2cm}
\label{table_2}
\end{table}

\noindent \textbf{Agreement with Ground-Truth Explanations.}
Table~\ref{tab:main_results} shows that NICE ranks first on F1, Recall, and AUC-ROC, outperforming the strongest baseline by an average of $6.42\%$, $8.67\%$, and $7.57\%$ respectively. 
Consistent gains are observed regardless of whether tasks are easy or challenging. 
In particular, the higher Recall and AUC indicate that NICE improves both the coverage and ranking of ground-truth edges, rather than relying on a favorable selection threshold.

\noindent \textbf{Faithfulness under Varying Sparsity.} 
Figure~\ref{fig:fidelity_curve} shows that NICE is particularly advantageous under small explanation budgets. 
This indicates that its highest-ranked edges preserve most of the target prediction with only a compact subgraph. 
As more edges are retained, the performance gap narrows because large budgets reduce the influence of ranking quality.

\subsection{Diagnosing Scale Drift and Evaluating NC (RQ2)}

We use the same explanations to examine the effects on message scale, graph representations and queried predictions.

% We first instantiate the same benchmark-annotated edge configuration using EM and NC, thereby isolating the effect of the perturbation primitive from that of edge selection. 
% As shown in Figure~\ref{fig:exp_fig_2}, EM systematically contracts message scale from the first layer, and the discrepancy from the clean computation persists or becomes more pronounced across subsequent message-passing layers. 
% In contrast, NC closely follows the clean message scale across representative datasets and both GCN and GIN backbones. 
% These results directly validate that EM introduces layer-wise Scale Drift, while NC substantially alleviates its deterministic scale contraction.
\noindent \textbf{Depth-amplified Scale Drift under EM.} 
As shown in Figure~\ref{fig:exp_fig_2}, EM contracts message scale from the first layer and this discrepancy persists or widens through subsequent message-passing layers. 
This pattern appears across representative datasets and both GCN and GIN backbones, confirming that Scale Drift is a systematic consequence of multiplicative masking rather than an isolated perturbation at a single layer. 
In contrast, NC remains substantially closer to the clean message-scale trajectory, validating its scale-stable ability.

\begin{figure}[tbp]
    \centering
    \includegraphics[width=1\linewidth]{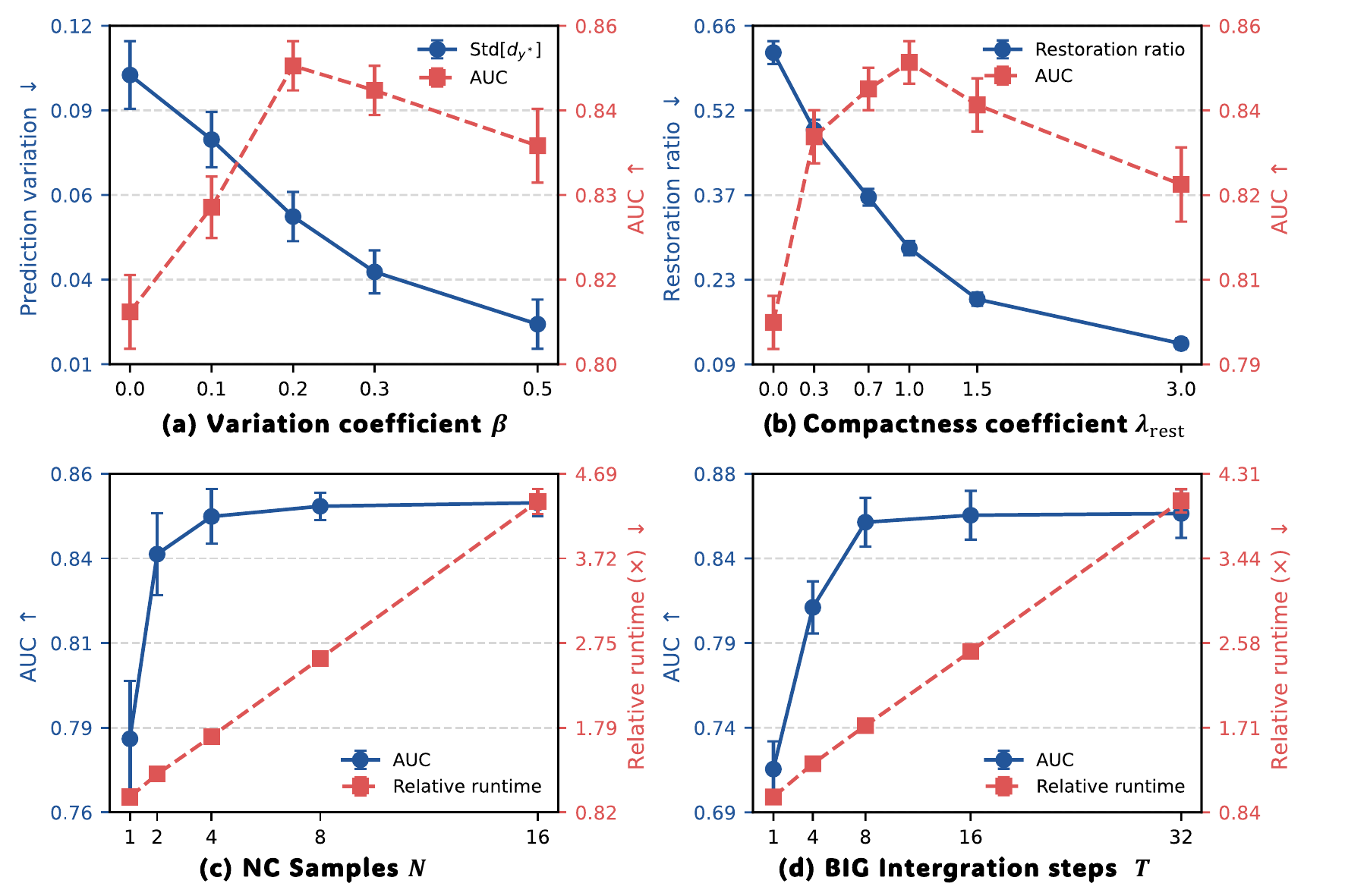}
\caption{
\textbf{Hyperparameter sensitivity of SRB and BIG.}
We study the effects of the variation coefficient $\beta$, compactness coefficient $\lambda_{\mathrm{rest}}$, NC sample number $N$, and BIG integration steps $T$ on performance, stability, compactness, and runtime. 
}
    \label{fig:exp_fig_3} \vspace{-0.3cm}
\end{figure}

\noindent\textbf{Representation and prediction effects.}
Table~\ref{table_2} further compares EM and NC under ground-truth and sparsity-matched random scores.  
NC reduces the normalized representation distance by $70.9\%$ and the target-prediction degradation by $78.8\%$. 
Importantly, these improvements persist under random scores, indicating that they arise from the perturbation mechanism instead of explanation quality alone. 
Together, the results show that EM-induced scale drift is accompanied by larger representation and prediction deviations, while NC keeps the perturbed computation closer to the clean regime.

\subsection{Ablation Study (RQ3 \& RQ4)}

We examine whether SRB learns compact restoration boundary and how BIG converts them into attribution scores.

\noindent \textbf{Why SRB is necessary.} 
Table~\ref{srb} verifies the dual objectives of SRB. 
Removing the variation penalty increases the mean prediction degradation by $61\%$ and its standard deviation by $217\%$, showing that variation aware
optimization boosts restoration quality and stability across NC samples. 
Without compactness regularization, the restoration ratio increases by $211\%$, while AUC decreases by $6.9\%$. 
Thus, restoring more messages can not yield superior explanations; compactness is essential for preventing over-restored boundary.

\noindent \textbf{From SRB to BIG.}
Figure~\ref{fig:case} compares the learned restoration boundary $\boldsymbol\rho^\star$ with $s^{\mathrm{BIG}}$ on the same graph. 
While each $\rho^\star$ records how far edge $e$ is restored at the learned boundary. 
BIG instead accumulates the contribution of the edge to reducing stochastic restoration risk along the restoration path, yielding a sharper ranking over the ground-truth explanatory edges. 
This comparison illustrating how it converts the joint restoration boundary into edge-level attributions.

\noindent \textbf{Sensitivity and Efficiency.}
% Figure~\ref{fig:exp_fig_3}(a--b) reveals clear trade-offs in SRB: increasing $\beta$ improves stability across NC samples, while a larger $\lambda_{\mathrm{rest}}$ yields a more compact boundary; overly large values, however, reduce attribution quality or prediction restoration. 
% Figure~\ref{fig:exp_fig_3}(c--d) shows that performance quickly saturates as the numbers of NC samples and BIG integration steps increase, whereas runtime continues to grow, supporting the default settings as a favorable accuracy--efficiency trade-off.
Figure~\ref{fig:exp_fig_3}(a--b) shows clear trade-offs in SRB. 
Increasing $\beta$ improves stability across NC samples, whereas an excessively large value reduces attribution quality. 
Similarly, increasing $\lambda_{\mathrm{rest}}$ yields a more compact boundary but eventually compromises target-prediction restoration. 
Figure~\ref{fig:exp_fig_3}(c--d) shows that performance quickly saturates as the numbers of NC samples and BIG integration steps increase, whereas runtime continues to grow, supporting the default settings as a favorable trade-off.

\begin{figure}[tbp]
    \centering
    \includegraphics[width=1\linewidth]{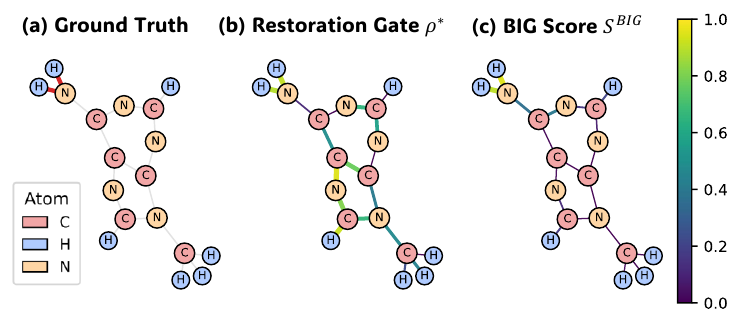}
\caption{
\textbf{Case study.} 
Gate scores indicate restoration degrees, whereas BIG more clearly highlights edges.
}
    \label{fig:case} \vspace{-0.3cm}
\end{figure}

\begin{table}[t]
\centering
\label{tab:srb_big}
\resizebox{\columnwidth}{!}{
\begin{tabular}{ccccc}
\toprule
Method
& Mean $\downarrow$
& Std. $\downarrow$
& Rest.$\downarrow$
& AUC $\uparrow$  \\
\midrule
% \multicolumn{6}{l}{\textit{(a) Restoration boundary learning}} \\
Full SRB
& \textbf{\mstd{0.23}{0.06}} & \textbf{\mstd{0.06}{0.02}} & \textbf{\mstd{0.27}{0.11}} & \textbf{\mstd{0.87}{0.00} } \\
w/o Var.
& \mstd{0.37}{0.08} & \mstd{0.19}{0.07} & \underline{\mstd{0.41}{0.13}} & \underline{\mstd{0.83}{0.06}}  \\
w/o Comp.
& \underline{\mstd{0.25}{0.11}} & \underline{\mstd{0.09}{0.04}} & \mstd{0.84}{0.42} & \mstd{0.81}{0.02}  \\
\bottomrule
\end{tabular}
}
\caption{
Analysis of SRB. Results are averaged over Benzene.
}
\label{srb}
\vspace{-0.3cm}
\end{table}
\section{Conclusion}

This work revisits distribution shift in Perturb-Query GNN explanation from the perspective of the perturbation mechanism. 
We identify \textit{Scale Drift}, showing that Element-wise Masking couples information corruption with deterministic scale contraction that accumulates across
message-passing layers.
To avoid this confounding effect, we introduce Noise Corruption, which defines a scale-stable restoration space through matched-norm random-direction corruption.  
Building on this space, NICE first learns a compact stochastic restoration boundary through SRB and then converts the restoration process into explanation scores through BIG. 
Experiments demonstrate stronger performance while validating the Scale Drift diagnosis and the scale-stability of NC.

% Several limitations remain.
% NC preserves the squared message norm only in expectation,
% rather than for every sample or for the complete message
% distribution.
% NICE also requires differentiable access to internal message
% passing and incurs additional computation from NC sampling
% and path integration.
% Moreover, our current analysis focuses on edge-level
% explanations for graph classification, and the theoretical
% characterization relies on local linearization.
% Extending NICE to other explanation settings and developing
% more efficient stochastic estimators are promising directions.

\bibliography{aaai2027, iclr2026_conference, sample-base}

\appendix
% Required packages

\clearpage

\section{A. Notation}
\label{sec:notation}

The main notation is summarized in Table~\ref{tab:notation_1} and Table~\ref{tab:notation_2}. 

\begin{table}[htbp]
    \centering
    \renewcommand{\arraystretch}{1.05}
    \setlength{\tabcolsep}{3pt}

    \resizebox{\columnwidth}{!}{
    \begin{tabular}{
        @{}
        >{\centering\arraybackslash}p{2.7cm}
        >{\raggedright\arraybackslash}p{6.15cm}
        @{}
    }
        \toprule
        \textbf{Symbol} & \textbf{Description} \\
        \midrule

        \multicolumn{2}{@{}l}{\textbf{Graph and target GNN}}\\[-1pt]
        $G=(\mathcal V, \mathcal E)$ & Input graph with node set $\mathcal{V}$ and edge set $\mathcal{E}$. \\
        $\mathcal V=\{v_1,\ldots,v_n\}$ & Set of nodes in graph $G$. \\
        $\mathcal E \subseteq \mathcal{V} \times \mathcal{V}$ & Set of edges in graph $G$. \\
        $v_i$ & The $i$-th node in graph $G$. \\
        $e_{ij}=(v_i,v_j)$ & Edge between nodes $v_i$ and $v_j$. \\
        % $n=|V|$ & Number of nodes in graph $G$. \\
        % $|E|$ & Number of edges in graph $G$. \\ 
        $C$ & Number of prediction classes. \\
        % $c$ & Index of a prediction class. \\
        $L$ & Number of message-passing layers in GNN. \\
        $l$ & Index of a layer, $l\in\{1,\ldots,L\}$. \\
        $d_l$ & Dimension of a message at layer $l$. \\
        $\mathbf{h}_i^{(l)}$ & Representation of node $v_i$ at layer $l$. \\
        $\phi^{(l)}$ & Message function at layer $l$. \\
        $\mathbf{m}_{i\leftrightarrow j}^{(l)}$ & Message between nodes $v_i$ and $v_j$ at layer $l$. \\
        $\mathbf{m}_e^{(l)}$ & Clean message induced by edge $e$ at layer $l$. \\
        $f$ & Fixed target GNN being explained. \\
        $f(G)\in\mathbb{R}^{C}$ & Prediction of the target GNN on graph $G$. \\
        $f_c(G)$ & Predicted score or probability for class $c$. \\
        $y^\star$ & Target class explained by explainer. \\
        \addlinespace[1pt]
        \midrule
        \multicolumn{2}{@{}l}{\textbf{Explanation and perturbation}}\\[-1pt]

        $s_e\in[0,1]$ & Attribution score assigned to edge $e$. \\
        $\mathbf{s}=\{s_e\}_{e\in E}$ & Collection of edge-attribution scores. \\
        $\rho_e\in[0,1]$ & Intervention coefficient over edge $e$. \\
        $\boldsymbol{\rho} =\{\rho_e\}_{e\in E}$ & Joint intervention/restoration configuration. \\
        $\Psi_{\rho_e}^{(l)}$ & Generic perturbation operator at layer $l$. \\
        $\widetilde{\mathbf{m}}_e^{(l)}$ & Perturbed message induced by $e$ at layer $l$. \\
        $\Psi_{\mathrm{EM},\rho_e}^{(l)}$& Element-wise Masking controlled by $\rho_e$. \\
        $\Psi_{\mathrm{NC},\rho_e}^{(l)}$& Noise Corruption operator controlled by $\rho_e$. \\

        \addlinespace[1pt]
        \midrule
        \multicolumn{2}{@{}l}{\textbf{Message scale and Scale Drift}}\\[-1pt]

        $\mathcal{S}^{(l)}(G)$& Layer-wise message scale  at layer $l$. \\
        $\mathcal{S}_{\mathrm{EM}}^{(l)}(G)$& Layer-wise message scale under EM. \\
        $\mathcal{S}_{\mathrm{NC}}^{(l)}(G)$& Layer-wise message scale under NC. \\
        $\eta_e^{(l)}$& Fraction of message scale carried by edge $e$. \\
        $\mathcal{S}_{\mathrm{EM}}^{(l)}(G)/\mathcal{S}^{(l)}(G)$ & Relative scale retained under EM. \\

        \bottomrule
    \end{tabular}
    }
    \caption{Summary of notation (Part 1).}
    \label{tab:notation_1}
\end{table}

\section{B. Proofs}
\label{app:proofs}

\begin{table}[ht]
    \centering
    \renewcommand{\arraystretch}{1.05}
    \setlength{\tabcolsep}{3pt}

    \resizebox{\columnwidth}{!}{
    \begin{tabular}{
        @{}
        >{\centering\arraybackslash}p{2.35cm}
        >{\raggedright\arraybackslash}p{6.15cm}
        @{}
    }
        \toprule
        \textbf{Symbol} & \textbf{Description} \\
        \midrule
        \multicolumn{2}{@{}l}{\textbf{Noise Corruption}}\\[-1pt]
        $\boldsymbol{\epsilon}_e^{(l)}$ & Matched-norm random corruption of $\mathbf{m}_e^{(l)}$. \\
        $\mathcal{P}_e^{(l)}$ & Distribution of the corrupted message $\boldsymbol{\epsilon}_e^{(l)}$. \\
        $\boldsymbol{\epsilon}$& Collection of corruption variables sampled. \\
        $\mathcal{P}_{\mathrm{NC}}$ & Joint distribution of corruption variables. \\
        $f(G;\boldsymbol{\rho}, \boldsymbol{\epsilon})$ & GNN prediction under restoration configuration
        $\boldsymbol{\rho}$ and NC realization
        $\boldsymbol{\epsilon}$. \\

        \addlinespace[1pt]
        \midrule
        \multicolumn{2}{@{}l}{
        \textbf{Stochastic Restoration Boundary Learning}}\\[-1pt]
        $[x]_+$ & Positive-part operator defined as $\max(x,0)$. \\

        $d_{y^\star}(\boldsymbol{\rho},\boldsymbol{\epsilon})$ & Target-prediction degradation under NC. \\

        $\mathcal{R}_{y^\star}(\boldsymbol{\rho})$ & Stochastic restoration risk of $\boldsymbol{\rho}$. \\

        $\beta$ & Coefficient of the variation penalty. \\

        $\lambda_{\mathrm{rest}}$ & Coefficient of the compactness penalty. \\

        $\boldsymbol{\rho}^{\star}$ & Learned stochastic restoration boundary. \\

        $\rho_e^{\star}$ & Restoration degree at the learned boundary. \\

        $N$ & Number of NC samples. \\

        $\boldsymbol{\epsilon}^{(n)}$ & The $n$-th independently sampled noise. \\

        $d_n$ &  The $n$-th target-prediction degradation. \\

        $\bar d_N$ & Mean target-prediction degradation. \\

        $\widehat{\mathcal{R}}_{y^\star}^{(N)}
        (\boldsymbol{\rho})$ & Estimated empirical restoration risk. \\

        $\|\boldsymbol{\rho}\|_1$ & Regularization boundary compactness. \\

        \addlinespace[1pt]
        \midrule
        \multicolumn{2}{@{}l}{\textbf{Boundary-Integrated Gradient}}\\[-1pt]

        $t\in[0,1]$ &  The continuous restoration path. \\

        $\boldsymbol{\rho}(t)$ & Restoration configuration at position $t$. \\

        $T$ & Number of numerical integration steps. \\

        $s_e^{\mathrm{BIG}}$ & BIG attribution of edge $e$. \\

        \addlinespace[1pt]
        \midrule
        \multicolumn{2}{@{}l}{\textbf{Operators and diagnostic metrics}}\\[-1pt]

        $\|\cdot\|_2$
        & Euclidean norm. \\

        $\|\cdot\|_1$
        & $\ell_1$ norm. \\

        $\mathbb{E}_{\boldsymbol{\epsilon}}[\cdot]$
        & Expectation over NC realizations. \\

        $\operatorname{Std}_{\boldsymbol{\epsilon}}[\cdot]$
        & Standard deviation over NC realizations. \\

        $D_{\mathrm{repr}}$
        &  Graph representations distance. \\

        $D_{\mathrm{pred}}$
        & Degradation of the target prediction. \\

        $\mathrm{Rest.}$
        & Average restoration  boundary. \\

        \bottomrule
    \end{tabular}
    }
    \caption{Summary of notation (Part 2).}
    \label{tab:notation_2}
\end{table}

\begin{table*}[htbp]
\centering
\resizebox{0.8\linewidth}{!}{% 适配单栏宽度，双栏模板可改为 \columnwidth
% \small
\begin{tabular}{c|cccccc}
\toprule
Statistic & Graphs  & Average Nodes & Average Edges & Node Features & Classes & Train/Test \\
\midrule
Mutag & 1,768 &  \textasciitilde 29.15 &  \textasciitilde 60.83 & 14 & 2 & 1597/177 \\
Benzene & 12,000 & \textasciitilde 20.58 & \textasciitilde 43.64 & 14 & 2 & 10800/1200 \\
Alkane & 1,125 & \textasciitilde 21.39 &  \textasciitilde 45.38 & 14 & 2 & 1012/113 \\
Fluoride & 8,671 & \textasciitilde 21.36 & \textasciitilde 45.37 & 14 & 2 & 7803/868 \\
B-XAIC & 5,000 & \textasciitilde 34.60 & \textasciitilde 75.19 & 11 & 2 & 45000/5000\\
% X &5,000 & Benzene & Alkane + C=O & $\ce{F^-}$ + C=O & Motif & \\
% P & 5,000 & Benzene & Alkane + C=O & $\ce{F^-}$ + C=O & Motif & \\
% Indole & 5,000 & Benzene & Alkane + C=O & $\ce{F^-}$ + C=O & Motif & \\
% PAINS & 5,000 & Benzene & Alkane + C=O & $\ce{F^-}$ + C=O & Motif & \\
% Rings count & 5,000 & Benzene & Alkane + C=O & $\ce{F^-}$ + C=O & Motif & \\
% Rings max & 5,000 & Benzene & Alkane + C=O & $\ce{F^-}$ + C=O & Motif & \\
\bottomrule
\end{tabular}
}
\caption{The statistics of the evaluated datasets and the split.}
\label{tab:dataset_stats}
\end{table*}

\subsection{Proof of Theorem~\ref{theorem_em}}

Recall that the layer-wise message scale is defined as
\begin{equation}
\mathcal S^{(l)}(G)
=
\frac{1}{|\mathcal E|}
\sum_{a\in\mathcal E}
\left\|
\mathbf m_a^{(l)}
\right\|_2^2.
\label{eq:proof_clean_scale}
\end{equation}
To isolate the direct effect of EM at layer $l$, we hold the
incoming layer representations fixed and apply the mask
coefficients $\{\rho_a\}_{a\in\mathcal E}$ to the resulting
messages. By Eq.~\eqref{eq:em}, the masked message induced by
edge $a$ is
\begin{equation}
\widetilde{\mathbf m}_a^{(l)}
=
\rho_a\mathbf m_a^{(l)}.
\end{equation}
Therefore, the corresponding message scale is
\begin{align}
\mathcal S_{\mathrm{EM}}^{(l)}(G)
&=
\frac{1}{|\mathcal E|}
\sum_{a\in\mathcal E}
\left\|
\widetilde{\mathbf m}_a^{(l)}
\right\|_2^2
\nonumber\\
&=
\frac{1}{|\mathcal E|}
\sum_{a\in\mathcal E}
\rho_a^2
\left\|
\mathbf m_a^{(l)}
\right\|_2^2.
\label{eq:proof_masked_scale}
\end{align}
Dividing Eq.~\eqref{eq:proof_masked_scale} by
Eq.~\eqref{eq:proof_clean_scale} gives
\begin{align}
\frac{
\mathcal S_{\mathrm{EM}}^{(l)}(G)
}{
\mathcal S^{(l)}(G)
}
&=
\frac{
\sum_{a\in\mathcal E}
\rho_a^2
\|\mathbf m_a^{(l)}\|_2^2
}{
\sum_{a\in\mathcal E}
\|\mathbf m_a^{(l)}\|_2^2
}
\nonumber\\
&=
\sum_{a\in\mathcal E}
\rho_a^2\eta_a^{(l)},
\label{eq:proof_weighted_mask}
\end{align}
where
\begin{equation}
\eta_a^{(l)}
=
\frac{
\|\mathbf m_a^{(l)}\|_2^2
}{
\sum_{b\in\mathcal E}
\|\mathbf m_b^{(l)}\|_2^2
}.
\end{equation}
Since
$\sum_{a\in\mathcal E}\eta_a^{(l)}=1$, we have
\begin{align}
\sum_{a\in\mathcal E}
\rho_a^2\eta_a^{(l)}
&=
\sum_{a\in\mathcal E}
\left[
1-(1-\rho_a^2)
\right]
\eta_a^{(l)}
\nonumber\\
&=
1-
\sum_{a\in\mathcal E}
(1-\rho_a^2)\eta_a^{(l)}.
\label{eq:proof_contraction}
\end{align}
Because $\rho_a\in[0,1]$ and
$\eta_a^{(l)}\geq0$, every term
$(1-\rho_a^2)\eta_a^{(l)}$ is non-negative. Hence,
\begin{equation}
\frac{
\mathcal S_{\mathrm{EM}}^{(l)}(G)
}{
\mathcal S^{(l)}(G)
}
=
1-
\sum_{a\in\mathcal E}
(1-\rho_a^2)\eta_a^{(l)}
\leq 1.
\end{equation}
Moreover, the inequality is strict whenever there exists an
edge $a$ such that
$\eta_a^{(l)}>0$ and $\rho_a<1$, because the corresponding
term
$(1-\rho_a^2)\eta_a^{(l)}$
is then strictly positive.
This completes the proof.
\hfill$\square$

\subsection{Proof of Corollary~\ref{corollary_depth}}

Suppose that an edge $e$ is masked with a fixed coefficient
$\rho_e\in(0,1)$ and satisfies
\begin{equation}
\eta_e^{(l)}
\geq
\underline{\eta}
>
0,
\qquad
l=1,\ldots,L.
\end{equation}
Theorem~\ref{theorem_em} gives, for every layer $l$,
\begin{align}
\frac{
\mathcal S_{\mathrm{EM}}^{(l)}(G)
}{
\mathcal S^{(l)}(G)
}
&=
1-
\sum_{a\in\mathcal E}
(1-\rho_a^2)\eta_a^{(l)}
\nonumber\\
&\leq
1-
(1-\rho_e^2)\eta_e^{(l)}
\nonumber\\
&\leq
1-
(1-\rho_e^2)\underline{\eta}.
\label{eq:proof_single_edge_bound}
\end{align}
The first inequality follows because all omitted terms are
non-negative.
Define
\begin{equation}
q
=
1-
(1-\rho_e^2)\underline{\eta}.
\end{equation}
Since $\rho_e\in(0,1)$ and
$\underline{\eta}>0$, we have
$0<q<1$.
Because the function $-\log x$ is monotonically decreasing
on $(0,\infty)$, Eq.~\eqref{eq:proof_single_edge_bound}
implies
\begin{equation}
-\log
\frac{
\mathcal S_{\mathrm{EM}}^{(l)}(G)
}{
\mathcal S^{(l)}(G)
}
\geq
-\log q.
\end{equation}
Summing the above inequality over
$l=1,\ldots,L$ yields
\begin{align}
\mathcal D_{\mathrm{EM}}^{(L)}
&=
-\sum_{l=1}^{L}
\log
\frac{
\mathcal S_{\mathrm{EM}}^{(l)}(G)
}{
\mathcal S^{(l)}(G)
}
\nonumber\\
&\geq
-L\log q
\nonumber\\
&=
-L
\log
\left[
1-(1-\rho_e^2)\underline{\eta}
\right].
\end{align}
The right-hand side is linear in $L$ with a strictly positive
coefficient because $q\in(0,1)$.
Therefore, the cumulative log-scale drift grows at least
linearly with the number of message-passing layers.
\hfill$\square$
\subsection{Proof of Proposition~\ref{prop:nc_scale_stability}}

For brevity, let
$\mathbf m=\mathbf m_e^{(l)}$,
$\boldsymbol\epsilon=\boldsymbol\epsilon_e^{(l)}$,
and $\rho=\rho_e$.
By the definition of NC,
\begin{equation}
\Psi_{\mathrm{NC},\rho}^{(l)}(\mathbf m)
=
\sqrt{\rho}\,\mathbf m
+
\sqrt{1-\rho}\,\boldsymbol\epsilon.
\end{equation}
Expanding its squared norm gives
\begin{align}
\left\|
\Psi_{\mathrm{NC},\rho}^{(l)}(\mathbf m)
\right\|_2^2
&=
\rho\|\mathbf m\|_2^2
+
(1-\rho)\|\boldsymbol\epsilon\|_2^2
\nonumber\\
&\quad+
2\sqrt{\rho(1-\rho)}
\mathbf m^\top\boldsymbol\epsilon.
\label{eq:nc_norm_expansion}
\end{align}
The corrupted counterpart is sampled uniformly from the
sphere
\begin{equation}
\left\{
\mathbf z:
\|\mathbf z\|_2
=
\|\mathbf m\|_2
\right\}.
\end{equation}
Therefore,
\begin{equation}
\|\boldsymbol\epsilon\|_2^2
=
\|\mathbf m\|_2^2
\end{equation}
almost surely.
Moreover, the uniform distribution on a sphere is symmetric
around the origin, and hence
\begin{equation}
\mathbb E_{\boldsymbol\epsilon}
[\boldsymbol\epsilon]
=
\mathbf 0.
\end{equation}
Consequently,
\begin{equation}
\mathbb E_{\boldsymbol\epsilon}
[
\mathbf m^\top\boldsymbol\epsilon
]
=
\mathbf m^\top
\mathbb E_{\boldsymbol\epsilon}
[\boldsymbol\epsilon]
=
0.
\end{equation}
Taking expectation on both sides of
Eq.~\eqref{eq:nc_norm_expansion}, we obtain
\begin{align}
\mathbb E_{\boldsymbol\epsilon}
\left[
\left\|
\Psi_{\mathrm{NC},\rho}^{(l)}(\mathbf m)
\right\|_2^2
\right]
&=
\rho\|\mathbf m\|_2^2
+
(1-\rho)\|\mathbf m\|_2^2
\nonumber\\
&=
\|\mathbf m\|_2^2.
\end{align}
Thus, NC preserves the squared message norm in expectation
for every restoration gate $\rho\in[0,1]$.
\hfill$\square$

\begin{table*}[!t]
\centering
\resizebox{0.85\textwidth}{!}{
\begin{tabular}{ccccc}
\toprule
\textbf{Dataset}
& \textbf{Target Acc. (\%)}
& \makecell{\textbf{Positive}\\\textbf{Ratio (\%)}}
& \textbf{Ground-Truth Explanation}
& \makecell{\textbf{GT Edge Ratio (\%)}\\
            \textbf{Mean $\pm$ Std.}} \\
\midrule
Mutagencity & 100.00 & 36.20 & $\ce{NO2}$ and $\ce{NH2}$ groups & $3.78 \pm 6.85$ \\
Benzene & 93.17 & 50.00 & Benzene ring & $13.17 \pm 13.60$ \\
Alkane & 100.00 & 33.33 & Alkane chain and $\ce{C=O}$ group & $1.46 \pm 2.10$ \\
Fluoride & 96.31 & 17.61 & Fluorine atom and $\ce{C=O}$ group & $2.69 \pm 6.04$ \\

\midrule
Indole & 98.68 & 36.70 & Fused benzene--pyrrole structure & $11.54 \pm 16.85$ \\
PAINS & 91.84 & 32.94 & Detected PAINS alert substructure & $10.34 \pm 17.01$ \\
Rings-Count & 94.10 & 30.20 & Annotated ring structures
& $61.00 \pm 16.38$ \\

Rings-Max
& 96.16
& 5.65
& Ring containing more than six atoms
& $46.48 \pm 17.15$ \\
\bottomrule
\end{tabular}
}
\caption{
Predictive performance of the target GINs and statistics of
the ground-truth explanations.
The GT edge ratio is the proportion of ground-truth
explanatory edges among all edges, averaged over the graphs
included in explanation evaluation.
}
\label{tab:model_stats}
\end{table*}

\section{C. Datasets and Target GNNs}\label{app:datasets}

We evaluate NICE on eight molecular graph-classification benchmarks with ground-truth explanatory structures.
The first four are widely used molecular explanation benchmarks, while the remaining four are selected tasks from
the B-XAIC benchmark. 
In all datasets, atoms are represented as graph nodes and chemical bonds as graph edges. 
Since NICE produces edge-level attributions, we use the annotated bonds belonging to the corresponding chemical patterns as ground-truth explanatory edges.
The dataset details are introduced as follows and the dataset statistics are summarized in Table~\ref{tab:dataset_stats}.

For each dataset, we train an independent GIN classifier as the target model to be explained. 
Each model uses a hidden dimension of $32$ and is optimized with Adam for at most $1{,}000$ epochs using a learning rate of $0.01$ and a batch size of $2{,}048$. 
We employ cross-entropy loss and reduce the learning rate by a factor of $0.5$ when the validation loss does not improve for $100$ consecutive epochs. 
The checkpoint with the highest validation accuracy is selected, after which its performance is evaluated once on the held-out test set.

Table~\ref{tab:model_stats} reports the predictive accuracy of the resulting target GINs together with the positive-label ratio and ground-truth explanation statistics. 
For each evaluated graph $G$, we define the ground-truth edge ratio as
\begin{equation}
r_{\mathrm{GT}}(G)
=
\frac{|\mathcal E_{\mathrm{GT}}(G)|}{|\mathcal E(G)|},
\end{equation}
where $\mathcal E_{\mathrm{GT}}(G)$ denotes its
ground-truth explanatory edges.
We report the mean and standard deviation of
$r_{\mathrm{GT}}(G)$ over the graphs included in explanation
evaluation.
The target GINs attain high predictive accuracy across the
eight tasks, ensuring that the subsequent experiments evaluate
explanation quality rather than classifier failure.

\subsection{Molecular Explanation Benchmarks}

\paragraph{Mutagenicity.}
Mutagenicity~\cite{mutag} contains $4{,}337$ molecular graphs labeled according to whether the corresponding compound exhibits mutagenic activity. 
The annotated nitro and amino functional groups ($\mathrm{NO_2}$ and $\mathrm{NH_2}$) serve as the ground-truth explanations. 
The dataset contains on average $30.32$ nodes and $30.77$ edges per graph.

\paragraph{Benzene.}
Benzene~\cite{graphxai} is a binary classification dataset containing $12{,}000$ molecular graphs
sampled from ZINC15. 
The task is to determine whether a molecule contains at least one benzene ring. 
The atoms and bonds forming each detected benzene ring are annotated as the ground-truth explanation. 
When a molecule contains multiple disjoint benzene rings, each ring is regarded as a valid explanatory structure. 
The graphs contain on average $20.58$ nodes and $43.65$ edges.

\paragraph{Alkane-Carbonyl.}
Alkane-Carbonyl~\cite{graphxai} contains $4{,}326$ molecular graphs. 
A molecule receives a positive label when it simultaneously contains an unbranched alkane chain and a carbonyl ($\mathrm{C{=}O}$) group. 
The ground-truth explanation is defined as the union of the annotated alkane chain and carbonyl group. 
The average graph contains $21.13$ nodes and $44.95$ edges.

\paragraph{Fluoride-Carbonyl.}
Fluoride-Carbonyl~\cite{graphxai} consists of $8{,}671$ molecular graphs. 
A positive molecule must contain both a fluorine atom and a carbonyl ($\mathrm{C{=}O}$) group. 
The fluorine atom, the carbonyl group, and their corresponding annotated bonds jointly form the ground-truth explanation. 
The graphs contain on average $21.36$ nodes and $45.37$ edges.

\subsection{B-XAIC Tasks}

B-XAIC~\cite{B-XAIC} is constructed from
ChEMBL~35, which contains approximately $2.5$ million
drug-like molecules.
Invalid and duplicated SMILES strings are removed, and
solvents and counterions are discarded to retain one molecular
graph per example.
Weighted sampling is then used to obtain $50{,}000$ molecules,
which are divided into training, validation, and test sets of
$40{,}000$, $5{,}000$, and $5{,}000$ graphs, respectively.
The resulting molecular graphs contain $34.56$ atoms on
average.
Each molecule is associated with binary task labels and with
atom- and bond-level annotations for the detected chemical
patterns.
The four tasks below therefore share the same molecular
collection but differ in their prediction targets and
ground-truth explanations.

\paragraph{Indole.}
The Indole task predicts whether a molecule contains an indole
group, i.e., a bicyclic structure consisting of a benzene ring
fused with a pyrrole ring.
Detecting this relatively large pattern requires information to
be propagated across multiple atoms.
For positive molecules, the atoms and bonds forming the indole
structure constitute the ground-truth explanation.
Approximately $36.94\%$ of the B-XAIC molecules receive a
positive label for this task.

\paragraph{PAINS.}
The PAINS task detects pan-assay interference compounds,
whose characteristic substructures are known to produce
false-positive outcomes in high-throughput screening.
Unlike tasks defined by a single motif, PAINS involves a
diverse collection of chemically distinct alert patterns.
A molecule is labeled positive when it contains at least one
of these patterns, and the atoms and bonds belonging to the
detected PAINS substructure form the ground-truth explanation.
The positive-label ratio is approximately $32.88\%$.

\paragraph{Rings-Count.}
The Rings-Count task predicts whether a molecule contains more
than four rings.
It therefore requires not only detecting ring structures but
also counting their occurrences within the molecular graph.
The atoms and bonds participating in the ring structures
relevant to the count are provided as ground-truth explanation
annotations.
Approximately $30.06\%$ of the molecules are positive for
this task.

\paragraph{Rings-Max.}
The Rings-Max task predicts whether a molecule contains a ring with more than six atoms. 
In contrast to Rings-Count, which concerns the number of rings, this task requires determining the size of individual ring structures. 
For positive instances, the atoms and bonds forming the qualifying large ring constitute the ground-truth explanation. 
This is the most imbalanced of the four selected B-XAIC tasks, with a positive-label ratio of approximately $5.54\%$.

\section{D. Related Work}
\label{sec:related_work}

\paragraph{Post-hoc GNN Explanation} 
Post-hoc GNN explainers aim to identify the nodes, edges, or subgraphs responsible for a prediction of a trained GNN. 
Gradient-based methods directly measure prediction sensitivity with respect to graph features or structures \cite{guided-bp, SA}, whereas search- and decomposition-based methods construct explanatory subgraphs or decompose predictions into structural contributions \cite{subgraphX,graphmask}. 
A prominent line follows the Perturb-Query paradigm, in which graph elements are perturbed and their importance is inferred from the response of the target GNN. 
GNNExplainer learns an instance-specific soft mask by maximizing mutual information \cite{ying2019gnnexplainer}, while PGExplainer amortizes mask generation across graph instances
\cite{pge}. 
ReFine further introduces class-aware explanation generation to capture contrastive structures \cite{ReFine}. 
Despite their different score-generation and optimization strategies, many Perturb-Query explainers instantiate continuous scores through multiplicative suppression of edges
or their induced messages. 
NICE belongs to this general paradigm but revisits the message-level perturbation mechanism used to query the target GNN.

\paragraph{Distribution Shift in Perturb-Query Explanation}
Perturbing a graph may move the queried input or its internal representation away from the operating regime of the target GNN, making prediction changes difficult to interpret \cite{OAR,robust_f}. 
Existing methods address this issue at different stages of the explanation pipeline. 
D4Explainer formulates explanation generation as a discrete denoising process to improve the in-distribution property of generated graphs \cite{d4e}. 
Mixup-based methods combine explanatory structures with base graphs to reduce the discrepancy between explanation subgraphs and the original data distribution \cite{mixup}. 
V-InFoR learns robust graph representations under structural corruption \cite{vinfor}, while ProxyExplainer constructs in-distribution proxy graphs for querying the target model \cite{proxy}. 
More recently, IDEA aligns input graphs and explanations in a prototypical representation space \cite{yin2026paradig}, and ConfExplainer corrects unreliable query signals through confidence-aware optimization \cite{Zhang2025IsYE}. 
Related studies also revisit how faithfulness should be evaluated when graph perturbations themselves introduce distribution shift \cite{robust-counter}.

These approaches primarily improve the generated explanations, their representations, or the optimization and evaluation of queried predictions. 
This work examines a complementary source of ambiguity: multiplicative masking simultaneously suppresses edge-specific message signals and contracts propagation scale. 
We therefore do not regard NC as a smoother approximation to edge deletion. 
Instead, NICE defines a scale-controlled restoration intervention that measures how restoring clean-message directions contributes to the original target prediction under matched expected message scale. 

\paragraph{Path-Based Attribution} 
Path-based attribution methods accumulate gradients along a continuous path from a reference state to the input, with
Integrated Gradients being a representative approach \cite{sundararajan2017axiomatic}. 
In graph explanation, gradients with respect to edge weights have also been connected theoretically to perturbation-based and occlusion-based explanations under specific model assumptions \cite{automatic_concept}. 
BIG adopts the path-integration principle but differs in both its path and attribution objective. 
It integrates gradients of stochastic restoration risk in the restoration-gate space, from the fully direction-corrupted
state to the boundary learned by SRB. 
Consequently, BIG attributes each edge's contribution to target-prediction restoration rather than its input sensitivity or deletion effect.

\section{E. Implement Details}

\label{app:implementation}

This section provides the optimization procedure of
Stochastic Restoration Boundary Learning (SRB), the
explanation-generation procedure based on
Boundary-Integrated Gradient (BIG), and the configurations
used in our experiments.

\begin{algorithm}[t]
\caption{BIG-Based Explanation Generation}
\label{alg:big}
\begin{algorithmic}[1]
\REQUIRE Graph $G$, frozen target GNN $f$, trained explainer
$g_{\boldsymbol\theta^\star}$, NC sample number $M$, and
integration steps $T$
\ENSURE Edge-level attribution
$\mathbf s^{\mathrm{BIG}}$

\STATE Perform a clean forward pass and obtain
$y^*=\arg\max_c f_c(G)$
\STATE Compute edge representations
$\{\mathbf z_e\}_{e\in\mathcal E}$
\STATE Compute the restoration boundary
$\rho_e^\star
=\sigma(g_{\boldsymbol\theta^\star}(\mathbf z_e))$

\FOR{$k=1,\ldots,T$}
    \STATE Set
    $\boldsymbol\rho^{(k)}
    =\frac{k}{T}\boldsymbol\rho^\star$
    \STATE Estimate
    $\widehat{\mathcal R}_{y^*}
    (\boldsymbol\rho^{(k)})$
    using $M$ independent NC samples
    \STATE Compute
    $\nabla_{\boldsymbol\rho}
    \widehat{\mathcal R}_{y^*}
    (\boldsymbol\rho^{(k)})$
\ENDFOR

\STATE Compute
\[
\mathbf s^{\mathrm{BIG}}
=
-
\boldsymbol\rho^\star
\odot
\frac{1}{T}
\sum_{k=1}^{T}
\nabla_{\boldsymbol\rho}
\widehat{\mathcal R}_{y^*}
(\boldsymbol\rho^{(k)})
\]
\STATE Map message-level scores to graph edges and rank them
in descending order
\RETURN $\mathbf s^{\mathrm{BIG}}$
\end{algorithmic}
\end{algorithm}

\subsection{SRB Training Stage}
\label{app:srb_training}

\paragraph{Explainer parameterization.}
The target GNN is pretrained and remains frozen throughout explainer optimization. 
For each edge $e$, we obtain an edge representation $\mathbf z_e$ from the hidden representations produced by the target GNN and parameterize its restoration gate as
\begin{equation}
\rho_e
=
\sigma\left(g_{\boldsymbol\theta}(\mathbf z_e)\right),
\end{equation}
where $g_{\boldsymbol\theta}$ is a multilayer perceptron and
$\sigma(\cdot)$ is the sigmoid function.
The same edge-level restoration gate is used across all message-passing layers.
Only the parameters $\boldsymbol\theta$ of the explainer are updated during SRB training.

\paragraph{Stochastic restoration risk estimation.}
For each graph $G$, we first perform a clean forward pass to obtain its original target prediction
\begin{equation}
y^*=\arg\max_c f_c(G)
\end{equation}
and the corresponding target-class probability
$p_{y^*}(G)$.
Given the restoration configuration
$\boldsymbol\rho=\{\rho_e\}_{e\in\mathcal E}$, we sample
$N$ independent collections of matched-norm corrupted messages:
\begin{equation}
\boldsymbol\epsilon^{(n)}
\sim\mathcal P_{\mathrm{NC}},
\qquad n=1,\ldots,N.
\end{equation}
Each collection defines one independent NC forward pass. 
For the $n$-th NC sample, the one-sided target-prediction degradation is
\begin{equation}
d_n
=
\left[
\log p_{y^*}(G)
-
\log p_{y^*}
\left(
\boldsymbol\rho,
\boldsymbol\epsilon^{(n)}
\right)
\right]_+.
\end{equation}
The empirical stochastic restoration risk is computed as
\begin{equation}
\widehat{\mathcal R}_{y^*}^{(N)}
(\boldsymbol\rho)
=
\overline d_N
+
\beta
\sqrt{
\frac{1}{N}
\sum_{n=1}^{N}
\left(d_n-\overline d_N\right)^2
},
\end{equation}
where
\begin{equation}
\overline d_N
=
\frac{1}{N}
\sum_{n=1}^{N}d_n.
\end{equation}

The NC samples are resampled at every optimization step. 
Consequently, the explainer is optimized over the distribution of matched-norm corruptions rather than a fixed set of noise samples. 

\paragraph{Batch-level optimization.}
For a mini-batch $\mathcal B$, we optimize
\begin{equation}
\mathcal L_{\mathcal B}
=
\frac{1}{|\mathcal B|}
\sum_{G\in\mathcal B}
\left[
\widehat{\mathcal R}_{y^*}^{(N)}
(\boldsymbol\rho_G)
+
\lambda_{\mathrm{rest}}
\frac{
\|\boldsymbol\rho_G\|_1
}{
|\mathcal E_G|
}
\right].
\label{eq:batch_srb}
\end{equation}
The first term encourages accurate and stable target-prediction restoration across NC samples, whereas the second term penalizes unnecessary restoration. 
Normalizing the compactness term by the number of edges prevents graph size from changing its effective strength.

The explainer parameters are optimized using Adam. 
After training, the learned restoration configuration for a graph is
\begin{equation}
\boldsymbol\rho^\star_G
=
\left\{
\sigma
\left(
g_{\boldsymbol\theta^\star}(\mathbf z_e)
\right)
\right\}_{e\in\mathcal E_G}.
\end{equation}
Each $\rho_e^\star$ represents the restoration degree of edge $e$ at the learned stochastic restoration boundary; it is not directly used as the final edge attribution.

\subsection{BIG-Based Explanation Generation}
\label{app:big_generation}

Once SRB training is complete, the explainer parameters and the target GNN are both fixed.
For each graph, we first compute its restoration boundary $\boldsymbol\rho^\star$ and then use BIG to convert the joint restoration process into edge-level attributions.

\paragraph{Restoration path.}
We construct a linear path in the restoration-gate space:
\begin{equation}
\boldsymbol\rho(t)
=
t\boldsymbol\rho^\star,
\qquad t\in[0,1].
\end{equation}
The starting point $\boldsymbol\rho(0)=\mathbf 0$ corresponds to fully direction-corrupted messages, whereas $\boldsymbol\rho(1)=\boldsymbol\rho^\star$ reaches the
learned restoration boundary. 
The path is linear in the gate space; because NC uses $\sqrt{\rho_e}$ in the message transformation, it is not a linear interpolation in the message-vector space.

\paragraph{Numerical integration.}
We approximate the BIG integral using a right-endpoint Riemann sum with $T$ uniformly spaced path points:
\begin{equation}
t_k=\frac{k}{T},
\qquad
\boldsymbol\rho^{(k)}
=
t_k\boldsymbol\rho^\star,
\qquad
k=1,\ldots,T.
\end{equation}
At every path point, we estimate the gradient of stochastic restoration risk with respect to the restoration gates. 
The resulting attribution is
\begin{equation}
\widehat s_e^{\mathrm{BIG}}
=
-
\frac{\rho_e^\star}{T}
\sum_{k=1}^{T}
\left.
\frac{
\partial
\widehat{\mathcal R}_{y^*}
(\boldsymbol\rho)
}{
\partial\rho_e
}
\right|_{
\boldsymbol\rho=
\boldsymbol\rho^{(k)}
}.
\label{eq:big_approximation}
\end{equation}
The factor $\rho_e^\star$ accounts for the total displacement of edge $e$ from the fully corrupted state to the learned boundary. 
The negative sign assigns a positive contribution when restoring the edge reduces stochastic restoration risk.

\paragraph{Edge ranking.}
The BIG scores are computed for all candidate edges and ranked in descending order to obtain the final explanation. 
For undirected molecular graphs, the two directed message-passing entries corresponding to the same chemical bond share one restoration gate and are mapped back to one bond-level attribution. 
The resulting scores are used directly for ranking-based metrics, while the Top-$k$ edges are retained for threshold-dependent metrics such as F1, Recall, Precision and Fidelity.

% Recall & Precision on various Tasks
\begin{table*}[htbp]
  \centering
\resizebox{\linewidth}{!}{
\renewcommand{\arraystretch}{1.3}
\begin{tabular}{cc|cccccccc}
\toprule
\textbf{Metric} & \textbf{Method} & \textbf{Mutag} & \textbf{Benzene} & \textbf{Alkane} & \textbf{Fluoride} & \textbf{Indole} & \textbf{\textsc{PAINS}} & \textbf{R-Count} & \textbf{R-Max}\\

\multirow{10}{*}{\textbf{Prec}}
& Random
& \mstd{15.58}{0.68}
& \mstd{29.90}{0.57}
& \mstd{5.17}{0.63}
& \mstd{19.23}{0.34}
& \mstd{31.46}{0.08}
& \mstd{31.74}{0.19}
& \mstd{68.52}{0.15}
& \mstd{30.96}{1.03}\\

& Saliency
& \mstd{38.70}{0.24}
& \mstd{44.98}{0.02}
& \cellcolor{second}\underline{\mstd{0.00}{0.00}}
& \mstd{37.79}{0.00}
& \mstd{51.52}{0.00}
& \mstd{52.39}{0.02}
& \mstd{63.18}{0.01}
& \mstd{41.14}{0.00}\\

& GuidedBP
& \mstd{30.74}{0.00}
& \mstd{43.35}{0.02}
& \mstd{8.86}{0.00}
& \mstd{33.73}{0.00}
& \mstd{44.85}{0.00}
& \cellcolor{second}\underline{\mstd{54.74}{0.01}}
& \mstd{64.43}{0.01}
& \mstd{41.67}{0.00}\\

& GNNExplainer
& \mstd{24.47}{1.51}
& \mstd{19.68}{0.44}
& \cellcolor{best}\textbf{\mstd{24.84}{1.87}}
& \mstd{23.20}{0.68}
& \mstd{43.41}{0.55}
& \mstd{42.31}{0.10}
& \mstd{85.03}{1.08}
& \mstd{42.59}{1.33}\\

& PGExplainer
& \cellcolor{second}\underline{\mstd{39.74}{0.22}}
& \mstd{66.20}{0.13}
& \mstd{15.53}{0.00}
& \cellcolor{second}\underline{\mstd{49.96}{0.33}}
& \mstd{48.76}{0.03}
& \mstd{49.00}{0.00}
& \mstd{88.02}{0.00}
& \cellcolor{best}\textbf{\mstd{51.15}{0.00}}\\

& Refine
& \mstd{34.43}{0.75}
& \mstd{42.77}{0.47}
& \mstd{6.17}{0.00}
& \mstd{19.84}{2.11}
& \mstd{40.89}{0.08}
& \cellcolor{best}\textbf{\mstd{58.01}{0.05}}
& \mstd{86.40}{0.08}
& \mstd{48.65}{0.00}\\

& D4Explainer
& \mstd{23.29}{0.76}
& \mstd{47.48}{0.20}
& \cellcolor{second}\underline{\mstd{15.73}{0.54}}
& \mstd{21.54}{0.91}
& \mstd{39.11}{0.69}
& \mstd{43.86}{0.21}
& \cellcolor{best}\textbf{\mstd{92.43}{2.13}}
& \mstd{44.40}{1.60}\\

& ProxyExplainer
& \mstd{30.79}{0.00}
& \mstd{48.54}{0.04}
& \mstd{8.45}{0.00}
& \mstd{29.77}{0.00}
& \mstd{40.91}{0.19}
& \mstd{38.62}{0.03}
& \cellcolor{second}\underline{\mstd{89.40}{0.00}}
& \mstd{48.04}{0.02}\\

& ConfExplainer
& \cellcolor{best}\textbf{\mstd{42.57}{0.00}}
& \cellcolor{second}\underline{\mstd{73.46}{0.01}}
& \mstd{14.56}{0.00}
& \mstd{40.81}{0.06}
& \cellcolor{second}\underline{\mstd{60.38}{0.00}}
& \cellcolor{second}\underline{\mstd{59.08}{0.01}}
& \mstd{86.42}{0.00}
& \cellcolor{second}\underline{\mstd{48.99}{0.00}}\\

& \textbf{NICE(Ours)}
& \cellcolor{second}\underline{\mstd{41.70}{0.19}}
& \cellcolor{best}\textbf{\mstd{77.10}{0.05}}
& \cellcolor{best}\textbf{\mstd{16.10}{0.07}}
& \cellcolor{best}\textbf{\mstd{53.64}{0.04}}
& \cellcolor{best}\textbf{\mstd{72.29}{0.00}}
& \mstd{57.04}{0.00}
& \mstd{90.61}{0.00}
& \mstd{50.90}{0.00}\\

\bottomrule
\end{tabular}
}
\caption{
Precision of NICE and the compared baselines. 
The best result is highlighted in bold, and the second-best result is underlined.
}
\label{tab:precision}
\end{table*}

\subsection{Experimental Configuration}
\label{app:configuration}

\paragraph{Target GNNs.}
We train an independent GIN classifier for each dataset. 
The target GNN uses a hidden dimension of $32$ and is optimized with Adam using cross-entropy loss. 
The initial learning rate is $0.01$, the batch size is $2{,}048$, and the maximum number of training epochs is $1{,}000$. 
A ReduceLROnPlateau scheduler reduces the learning rate by a factor of $0.5$ when the validation loss does not improve for $100$ consecutive epochs. 
The checkpoint with the highest validation accuracy is used as the target model and remains frozen during explanation. 

\paragraph{NICE configuration.}
The restoration-gate predictor is optimized using Adam with a learning rate of $0.001$. 
Unless otherwise stated, we use $N=16$ independent NC samples to estimate the stochastic restoration risk, set the variation coefficient to $\beta=0.1$, and set the compactness coefficient to $\lambda_{\mathrm{rest}}=1.0$. 
BIG is approximated using $T=16$ uniformly spaced integration steps. 
All reported explanation results are averaged over five random seeds. 
All experiments are finished on a machine with 4 \textit{NVIDIA GEFORCE RTX 3090 24GIB} GPUs.

\begin{figure}[!htbp]
    \centering
    \includegraphics[width=1\columnwidth]{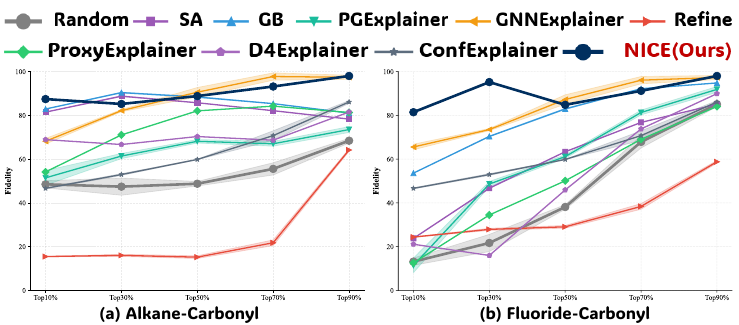}
\caption{
Fidelity under varying edge-retention ratios on
Alkane-Carbonyl and Fluoride-Carbonyl.
NICE preserves the target prediction particularly well under
compact explanation budgets, while the performance gap
gradually narrows as more edges are retained.
Higher values indicate better model faithfulness.
}
\label{fig:additional_fidelity}
\end{figure}

\section{F. Additional Experimental Results}

\paragraph{Explanation Performance (Precision).}
Table~\ref{tab:precision} reports the Precision results. NICE achieves the highest macro-average Precision of $57.42\%$, outperforming the strongest baseline, ConfExplainer, by $4.14$ percentage points. 
It ranks first on Benzene, Fluoride, and Indole, and second on Mutag, Alkane, Rings-Count, and Rings-Max, achieving a top-two result on seven of the eight datasets. 
The improvement is particularly pronounced on Indole, where NICE increases Precision from $60.38\%$ to $72.29\%$. 
Although NICE is slightly behind ConfExplainer and ReFine on PAINS, it remains competitive.
Together with the Recall and F1 results in Table~\ref{tab:main_results}, these results indicate that the improved coverage of ground-truth explanatory edges does not come at the cost of substantially reduced explanatory selectivity.

\begin{figure*}[!htbp]
    \centering
    \includegraphics[width=1\linewidth]{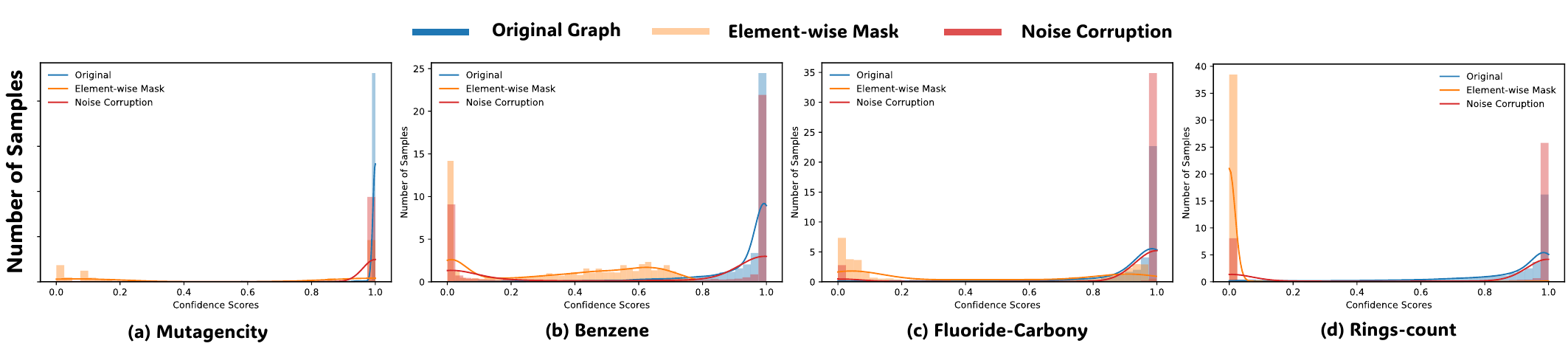}
\caption{
Distribution of target-class prediction confidence under the original graph, Element-wise Mask (EM), and Noise Corruption (NC) on four representative datasets.
For each graph, we instantiate the same ground-truth explanation with EM and NC and record the resulting target-class confidence. 
Compared with EM, NC generally preserves a confidence distribution that remains closer to the clean computation, while EM more often shifts samples toward low-confidence predictions.
}
    \label{fig:exp_fig_4} 
\end{figure*}

\paragraph{Additional fidelity results.}
Figure~\ref{fig:additional_fidelity} extends the fidelity analysis to Alkane-Carbonyl and Fluoride-Carbonyl. 
On Alkane-Carbonyl, NICE achieves the highest Fidelity under the most compact Top-$10\%$ budget and remains among the strongest methods across all retention ratios. 
The advantage is more pronounced on Fluoride-Carbonyl, where NICE substantially outperforms the baselines at Top-$10\%$ and Top-$30\%$. 
These results indicate that the highest-ranked edges identified by NICE preserve most of the target prediction even when only a small part of the graph is retained. 
As the retention ratio increases, several baselines gradually catch up because preserving most of the graph reduces the influence of edge-ranking quality. 
Together with the results in the main text, these curves show that NICE is particularly effective at producing compact and faithful explanations. 

\section{G. Further Analysis of Perturbation Mechanism}

This experiment further compares the computation induced by Element-wise Masking (EM) and Noise Corruption (NC). 
Unlike the explanation-performance evaluation, the purpose here is not to determine whether a selected edge configuration is correct. 
Instead, we hold the edge configuration fixed and examine how the two perturbation primitives affect the graph representation and the original target prediction.

\paragraph{Controlled configurations.}
We evaluate test graphs that are correctly classified by the target GNN, belong to the positive class, and contain a non-empty ground-truth explanation. 
For each graph, the ground-truth configuration assigns a gate of one to the annotated explanatory edges and zero to the remaining edges.
We denote this setting by \textit{GT}. 

To distinguish the effect of the perturbation primitive from that of edge selection, we additionally construct sparsity-matched random configurations, denoted by \textsc{Rand}. 
For each ground-truth configuration, we randomly sample the same number of undirected chemical bonds. 
The two directed message-passing entries corresponding to the same bond are treated as one edge group. 
Importantly, the exact same ground-truth or random configuration is instantiated using both EM and NC.

EM is deterministic and is evaluated with one forward pass. 
For NC, we average the results over $50$ independent matched-norm corruption samples for each fixed configuration. 
We generate $20$ sparsity-matched random configurations for each ground-truth explanation. 
When multiple ground-truth explanations are available for one graph, their results are first averaged within the graph. 
The reported mean and standard deviation are then computed across all eligible test graphs.

\paragraph{Representation distance.}
Let $\mathbf h(G)$ denote the clean graph representation, obtained by applying the target GNN's readout function to the final-layer node representations.
Let $\widetilde{\mathbf h}_{\Psi}(G;\boldsymbol\rho, \boldsymbol\epsilon)$ denote the corresponding representation under perturbation primitive $\Psi\in\{\mathrm{EM},\mathrm{NC}\}$. 
We measure the normalized representation distance by
\begin{equation}
D_{\mathrm{repr}}
=
\frac{
\left\|
\widetilde{\mathbf h}_{\Psi}
-
\mathbf h(G)
\right\|_2
}{
\left\|
\mathbf h(G)
\right\|_2+\varepsilon
},
\label{eq:repr_distance}
\end{equation}
where $\varepsilon$ is a small constant for numerical stability. 
For NC, Eq.~\eqref{eq:repr_distance} is averaged over the independent corruption samples. 
A smaller value indicates that the perturbed computation remains closer to the clean graph representation.

\paragraph{Target-prediction degradation.}
Let
\begin{equation}
y^*=\arg\max_c p_c(G)
\end{equation}
be the original target prediction.
We measure the one-sided degradation of its log-probability as
\begin{equation}
D_{\mathrm{pred}}
=
\left[
\log p_{y^*}(G)
-
\log p_{y^*}
\left(
\boldsymbol\rho,\boldsymbol\epsilon
\right)
\right]_+ .
\label{eq:prediction_distance}
\end{equation}
The metric is zero when the perturbation preserves or strengthens the target-class probability, and increases only when support for the original prediction decreases. 
For NC, the reported value is averaged over the independent corruption samples. 
Lower values are better for both metrics. 

For each metric $D$, the relative improvement of NC over EM is computed as
\begin{equation}
\mathrm{Imp.}
=
\frac{
D_{\mathrm{EM}}-D_{\mathrm{NC}}
}{
D_{\mathrm{EM}}
}
\times 100\%.
\label{eq:primitive_improvement}
\end{equation}
A positive value indicates that NC yields a smaller distance, whereas a negative value indicates that NC produces a larger distance than EM.

\begin{table}[htbp]
\centering
% \label{tab:primitive_control}
\resizebox{\columnwidth}{!}{
\begin{tabular}{c c c c c c}
\toprule
& & \multicolumn{2}{c}{$D_{\mathrm{repr}}\downarrow$}
& \multicolumn{2}{c}{$D_{\mathrm{pred}}\downarrow$} \\
\cmidrule(lr){3-4}
\cmidrule(lr){5-6}
Dataset & Method & GT & Rand & GT & Rand \\
\midrule
\multirow{3}{*}{Mutag}
& EM & \mstd{0.77}{0.12} & \mstd{0.46}{0.19} & \mstd{0.53}{0.17} & \mstd{0.29}{0.02}\\
& NC & \mstd{0.45}{0.14} & \mstd{0.38}{0.12} & \mstd{0.52}{0.11} & \mstd{0.44}{0.11}\\
& Imp. & 41.56\% & 17.39\% & 1.89\% & -51.72\% \\
\midrule
\multirow{3}{*}{Benzene}
& EM & \mstd{0.03}{0.07} & \mstd{0.11}{0.09} & \mstd{0.77}{0.26} & \mstd{0.38}{0.18}\\
& NC & \mstd{0.03}{0.03} & \mstd{0.06}{0.04} & \mstd{0.74}{0.15} & \mstd{0.43}{0.26}\\
& Imp. & 00.00\% & 45.45\% & 3.90 \% & -13.16\%  \\
\midrule
\multirow{3}{*}{Alkane}
& EM & \mstd{0.92}{0.33} & \mstd{0.61}{0.12} & \mstd{0.39}{0.26} & \mstd{0.42}{0.08}\\
& NC & \mstd{0.88}{0.13} & \mstd{0.56}{0.04} & \mstd{0.44}{0.25} & \mstd{0.53}{0.06}\\
& Imp. & 4.35\% & 8.20\% & -12.82\% & -26.19\%  \\
\midrule
\multirow{3}{*}{Fluoride}
& EM & \mstd{0.61}{0.17} & \mstd{0.83}{0.21} & \mstd{0.16}{0.16} & \mstd{0.11}{0.12}\\
& NC & \mstd{0.59}{0.11} & \mstd{0.76}{0.24} & \mstd{0.17}{0.15} & \mstd{0.19}{0.16}\\
& Imp. & 3.28\% & 8.43\% & -6.25 \% & -72.73\%  \\
\bottomrule
\end{tabular}
}
\caption{
\textbf{Controlled comparison of EM and NC under GTExplainer and Random.} 
$D_{\mathrm{repr}}$ and $D_{\mathrm{pred}}$ measure representation distance and target-prediction normalized degradation. 
\textit{Imp.} denotes the relative reduction ratio.
}
\label{tab:primitive_control_appendix}
\end{table}

\paragraph{Results.}
Table~\ref{tab:primitive_control_appendix} shows a clear difference between the representation- and prediction-level effects of NC. 
For $D_{\mathrm{repr}}$, NC yields a lower distance in seven of the eight evaluated configurations and ties EM under the ground-truth configuration on Benzene. 
The largest reductions occur on Mutag under the ground-truth configuration ($41.56\%$) and on Benzene under the random configuration ($45.45\%$). 
Although the gains are smaller on Alkane-Carbonyl and Fluoride-Carbonyl, NC remains consistently closer to the clean representation in both ground-truth and random settings. 
These results support the intended primitive-level effect of NC: avoiding deterministic scale contraction generally keeps the internal representation closer to the clean computation.

The effect on $D_{\mathrm{pred}}$ is less uniform. 
Under ground-truth configurations, NC slightly reduces the degradation on Mutag and Benzene, but produces moderately larger degradation on Alkane-Carbonyl and Fluoride-Carbonyl. 
Under random configurations, NC yields larger target-prediction degradation on all four datasets. 
This result is not inconsistent with the scale-stability property of NC. 
Preserving the expected squared message norm controls one source of computation shift, but NC still replaces clean message directions with stochastic corrupted directions. 
The prediction-level effect therefore also depends on the target GNN's local decision boundary and its sensitivity to the corrupted message information.

Together with the results reported in the main text, these findings indicate that the representation-level advantage of NC is substantially more consistent than its unoptimized prediction-level effect.
NC should therefore be viewed as defining a scale-controlled corruption and restoration space, rather than as guaranteeing that every fixed corruption configuration preserves the target prediction. 
This observation further motivates SRB, which explicitly learns a restoration boundary that minimizes target-prediction degradation under NC-induced uncertainty.

\paragraph{Prediction Confidence Results.} As shown in Figure~\ref{fig:exp_fig_4}, EM and NC exhibit clearly different confidence profiles. 
Across all four datasets, the clean computation concentrates a large fraction of samples near high target-class confidence. 
NC largely preserves this high-confidence regime, although its distribution is slightly broader due to stochastic random-direction corruption. 
In contrast, EM more frequently shifts samples toward substantially lower confidence, and in Benzene and Rings-Count even produces a pronounced low-confidence mode near zero. 
This pattern is particularly informative because EM and NC are instantiated on the same ground-truth explanation; hence, the difference cannot be attributed to edge-selection quality. 
Instead, it reflects the distinct perturbation primitives.  
Overall, the confidence distributions provide an intuitive behavior-level complement to the distance-based results: NC tends to keep the queried computation closer to the clean prediction regime, whereas EM more readily drives the model into low-confidence responses.

% Check whether the conference requires a reproducibility checklist to be included in the paper.
% If so, you can uncomment the following line and ajust the path to include it.
% \input{ReproducibilityChecklist.tex}

\end{document}